\newtheorem{critical point}{Theorem}
\newtheorem{integral curves}[critical point]{Theorem}
\newtheorem{reflection}[critical point]{Theorem}
\newtheorem{Nobstacles}[critical point]{Theorem}
\newtheorem{Multi-Robot}[critical point]{Theorem}
\newtheorem{vectorfield}{Definition}
\newtheorem{integralcurve}[vectorfield]{Definition}
\newtheorem{singularpoint}[vectorfield]{Definition}
\newtheorem{1obstacle}{Lemma}
\newtheorem{nocollisions}[1obstacle]{Lemma}
\newtheorem{Remark}{Remark}
\DeclareMathOperator{\atan2}{atan2}
\DeclareMathOperator{\F}{\mathrm F}
\DeclareMathOperator{\R}{\mathbb R}
\DeclareMathOperator{\N}{\textrm N}
\begin{document}

\title{Motion Planning and Collision Avoidance using Non-Gradient Vector Fields}
\author{Dimitra Panagou
\thanks{Dimitra Panagou is with the Department of Aerospace Engineering, University of Michigan, Ann Arbor, MI, USA; \texttt{dpanagou@umich.edu}.}}

\markboth{IEEE Transactions on }%
{Panagou:Multi}
%

\maketitle

\begin{abstract}
This paper presents a novel feedback method on the motion planning for unicycle robots in environments with static obstacles, along with an extension to the distributed planning and coordination in multi-robot systems. The method employs a family of 2-dimensional analytic vector fields, whose integral curves exhibit various patterns depending on the value of a parameter $\lambda$. More specifically, for an a priori known value of $\lambda$, the vector field has a unique singular point of dipole type and can be used to steer the unicycle to a goal configuration. Furthermore, for the unique value of $\lambda$ that the vector field has a continuum of singular points, the integral curves are used to define flows around obstacles. An almost global feedback motion plan can then be constructed by suitably blending attractive and repulsive vector fields in a static obstacle environment. The method 
does not suffer from the appearance of sinks (stable nodes) away from goal point. Compared to other similar methods which are free of local minima, the proposed approach does not require any parameter tuning to render the desired convergence properties. 
The paper also addresses the extension of the method to the distributed coordination and control of multiple robots, where each robot needs to navigate to a goal configuration while avoiding collisions with the remaining robots, and while using local information only. More specifically, based on the results which apply to the single-robot case, a motion coordination protocol is presented which guarantees the safety of the multi-robot system and the almost global convergence of the robots to their goal configurations. The efficacy of the proposed methodology is demonstrated via simulation results in static and dynamic environments. 
\end{abstract}

\acrodef{wrt}[w.r.t.]{with respect to}
\acrodef{apf}[APF]{Artificial Potential Fields}
\acrodef{ges}[GES]{Globally Exponentially Stable}

\section{Introduction}

Motion planning, coordination and control for robotic systems still remains an active research topic in many respects. The primary motivation has been the computation of safe, collision-free trajectories for robotic agents, mechanisms and autonomous vehicles which operate in constrained and/or uncertain environments. Research within the robotics community has attributed various formulations and methodologies on the motion planning problem, often specialized based on the control objectives and the characteristics of the problems at hand. These methodologies range from Lyapunov-based control methods, to sampling-based planning, to combinatorial planning, to formal methods \cite{Parker_2009, Kress-Gazit_TRO09, Bhatia_RAM11}. 
Multi-robot systems have attracted the interest of the control systems community as well. Emphasis has been given in consensus, flocking and formation control problems for multiple agents \cite{Ren_Cao_2011}.

Avoiding obstacles and inter-agent collisions is a requirement of highest priority in motion planning and coordination problems. Recently, significant interest has been paid to the high-level task planning under complex goals, where the problem for an autonomous robot has transitioned from the classical motion planning formulation (i.e., move from point $A$ to point $B$) to the consideration of complex goals under temporal specifications; such specifications are typically described as: ``visit region $A$, and then visit either region $B$ or region $C$". Despite the tremendous and elegant contributions in this area, which provide elegant solutions to the high-level mission synthesis with rigorous guarantees under certain assumptions on the considered environments \cite{Kress-Gazit_TRO09, Bhatia_RAM11}, the interconnection of high-level tasking with the physical layer/system is still an open problem in many respects. One issue is the consideration of multiple agents in dynamic environments and the associated complexity in finding provably correct solutions in the presence of nonlinearities, arbitrary constraints, and uncertainty.

The scope of this paper is to provide a solution to the motion planning problem for single and multiple nonholonomic agents in dynamic environments, where agents have local sensing and communication capabilities and which may be populated by dynamic (moving) obstacles. Our goal is to provide a feedback synthesis of low-level planning controllers along with certain guarantees, which can later on be combined with high-level tasks, such as dynamic coverage \cite{Panagou_CDC14}, towards provably correct feedback solutions for a specific class of dynamical systems in dynamic environments. The technical tools which we use towards this goal are set-invariance methods, which have been proved efficient in constrained control problems of a class of nonlinear, under-actuated systems \cite{Panagou_Automatica2013}. 

The spirit of the proposed solutions is similar, \emph{but not identical to}, Lyapunov-like scalar functions, such as the Avoidance Functions in \cite{Leitmann} and the \ac{apf} in \cite{Khatib86, HernandezMartinez_2011}. More specifically: It is well-known that, although scalar functions offer the merit of Lyapunov-based control design and analysis, yielding thus solutions in closed-form with certain guarantees \cite{Stipanovic_MonotoneApproximations2012}, they suffer from the drawback of possible local minima away from the goal point, i.e., of points in the state space other than the desired equilibrium at which the gradient vector vanishes; this in principle results in system trajectories which get stuck away from the goal point. Certain forms of potential functions may overcome this limitation; namely, navigation functions \cite{Rimon_Koditscek_1992} and harmonic functions \cite{Connolly, Kozlowski}, but under some cost: the caveat in the former case is that the Morse property which guarantees the non-existence of local minima is rendered after a tuning parameter exceeds a lower bound, which is not a priori known. In the latter case, harmonic functions may be constructed with either discrete or continuous approaches, but the computational cost of discrete methods is quite demanding. Continuous approaches which employ the analogies of Laplace equation with fluid mechanics yield closed-form solutions for certain dynamic environments \cite{Feder_ICRA1997}. Stream functions \cite{Waydo_ICRA2003} combine the local-minima-free property of harmonic functions along with hydrodynamic concepts to yield streamlines which may be preferable for second order systems. The method of vortex fields \cite{DeLuca_Oriolo_ICRA1994} uses the anti-gradient of a scalar function to define flows around obstacles. 

Now, let us note that one common ground in this class of solutions is the resulting \emph{gradient vector field} which is employed in the control synthesis. In this respect, the idea of \emph{directly} defining vector fields encoding obstacle avoidance has been studied for robot motion planning problems. In \cite{Lindemann_Lavalle_IJRR09}, for instance, simple smooth vector fields are locally constructed in given convex cell decompositions of polygonal environments, so that their integral curves are by construction collision-free and, in a sequential composition spirit, convergent to a goal point. The method, nevertheless, presumes the existence of a high-level discrete motion plan which determines the successive order of the cells from an initial to a final configuration.
Recent work employing vector fields for vehicles' navigation is presented also in \cite{Liddy_2008} 
and in \cite{Martinez_IECON09}. The approach with velocity vector fields in \cite{Dixon_IWRMC2005} is also relevant to the context. However, these contributions address only the position control of the robot, while the orientation is not guaranteed to converge to a desired value. 

Stepping now a little further away from single-agent problems: when it comes to multiple agents, their motion towards goal configurations defines a dynamic environment and poses challenges to the planning, coordination and control design, even in the absence of static physical obstacles. At the same time, limitations in the available sensing and communication platforms impose additional constraints to the multi-agent system. Given a pair $(i,j)$ of agents $i$ and $j$, agents typically make decisions on their actions based on available information, which can be either locally measured using onboard sensors, or transmitted and received across the nodes of the multi-agent system via wireless communication links. Thus, information flow between two agents can be either bidirectional (undirected) or unidirectional (directed). During the past ten years, research efforts have achieved the formalization of problems such as consensus and formation control in multi-agent networks using tools and notions from graph theory, matrix theory and Lyapunov stability theory \cite{GraphTheoretic_Egerstedt, Ren_CSM07, Olfati-Saber_IEEE07, Dimarogonas08_TRO, Loizou08_TRO}. The case of directed information exchange has recently attracted increased interest \cite{LinLiSCL08, Mei_Automatica_2012, Qu_Li_IJRNC12, Yuetal2014, Ren_TAC14}, motivated in part by the fact that undirected information flow is not always a realistic and practical assumption, due to bandwidth limitations in the network, anisotropic sensing of the agents etc. Extending consensus algorithms to nonlinear systems has also become popular, see for instance \cite{Ren_CDC07, LiuWen_SCL13}. 

Nevertheless, despite that consensus, flocking, and formation control algorithms achieve collision avoidance in multi-vehicle systems by carefully selecting initial conditions and controlling relative distance and heading, they are typically not used in encoding problems such as navigation to \emph{specific} goal locations for each one of the agents. In this respect, the development of planning and coordination algorithms for the motion of multiple agents along with safety and performance guarantees is an open problem in many respects. 

\subsection{Overview}
This paper presents a novel method on the motion planning and coordination in environments with static and/or dynamic obstacles, which results in feedback motion plans for unicycle robots along with collision avoidance guarantees. The method employs a family of two-dimensional analytic vector fields, originally introduced in \cite{Panagou_CDC11A}, given as:
\begin{align}
\label{2-d dipolar}
\mathbf F(\bm r) = \lambda (\bm p^T \bm r)\bm r - \bm p(\bm r^T \bm r),
\end{align}
where $\lambda\in\mathbb R$ is a parameter to be specified later on, $\bm r=\left[x\;\;y\right]^T$ the position vector \ac{wrt} a global cartesian frame and $\bm p=\left[p_x\;\;p_y\right]^T$, with $\bm p\neq \bm 0$.\footnote{The role the vector $\bm p\in \R^2$ plays in the properties of the vector field \eqref{2-d dipolar} becomes evident later on in Theorem \ref{reflection}.}

In \cite{Panagou_CDC11A} the family of vector fields \eqref{2-d dipolar} was employed in the control design for steering kinematic, drift-free systems in chained form in \emph{obstacle-free} environments. 

In this paper we first show that, except for a \emph{known} value of the parameter $\lambda$, the vector field \eqref{2-d dipolar} has a unique singular point on $\R^2$. More specifically:
\begin{inparaenum}
\item[(\emph{i})] For $\lambda>1$ the pattern of the integral curves around the unique singular point is dipolar \cite{Henle}. Such vector field can be used for steering a unicycle to a goal configuration.
\item[(\emph{ii})] For $\lambda=1$ the vector field has a continuum of singular points and can be used to define tangential flows around circular obstacles.
\item[(\emph{iii})] For $\lambda<0$ the pattern of the integral curves is suitable for defining repulsive flows away from lines, and as thus, away from polygonal obstacles. A preliminary example is given in the Appendix of \cite{Panagou_Technical_Report_Vector_Fields}.
\end{inparaenum}

We then consider the single-agent case in a static environment of circular obstacles and propose a blending mechanism between attractive and repulsive vector fields, which yields almost global feedback motion plans. In other words, we construct vector fields whose integral curves are convergent to a goal configuration, except for a set of initial conditions of Lebesgue measure zero, and collision-free by construction. This in turn results in simple feedback control laws, which force the system to flow along the vector field. 

We finally consider the extension of the methodology to the distributed coordination and control for multiple nonholonomic agents. Based on the results for the single-agent case in static obstacle environments, we propose a coordination protocol for multiple agents which need to converge to specific goal configurations, using local information only. The proposed protocol yields collision-free and almost globally convergent trajectories for the multi-agent system.   

\subsection{Contributions and Organization}
When it comes to the single-agent case, i.e., to a robot operating in a known, static environment of circular obstacles, the proposed method does not suffer from the appearance of sinks (stable nodes) away from goal point. Furthermore, compared to similar feedback methods which rely on scalar (potential) functions, such as \cite{Rimon_Koditscek_1992}, the main difference and advantage of the proposed approach is that:
\begin{enumerate}
\item[(i)] no parameter tuning is needed in order to render the desired convergence properties; the values of the parameter $\lambda$ of the vector field are known \emph{a priori}.
\end{enumerate}
Compared to similar methods which rely on vector fields, such as \cite{Lindemann_Lavalle_IJRR09}, the proposed method:
\begin{enumerate}
\item[(ii)] requires neither the computation of a cell decomposition of the free space, nor the existence of a high-level discrete motion plan, and as thus it is free of any computational complexity issues,
\item[(iii)] addresses the motion planning and collision avoidance for multiple agents in dynamic environments, and is scalable as the number of agents increases.
\end{enumerate}
Finally, compared to other similar vector field based methods, such as \cite{Dixon_IWRMC2005, Liddy_2008, Martinez_IECON09}, the proposed method:
\begin{enumerate}
\item[(iv)] guarantees the convergence of the orientation trajectories of the robots to any predefined value.
\end{enumerate}
\begin{Remark}
While here we consider circular, not polygonal, obstacle environments, preliminary results reveal that the method can be used for defining repulsions around polygonal obstacles as well, see the Appendix in \cite{Panagou_Technical_Report_Vector_Fields}.
\end{Remark}

When it comes to the multi-agent case, i.e., to multiple agents moving towards goal configurations while avoiding collisions, the proposed method:
\begin{enumerate}
\item[(v)] offers the flexibility to directly impose the minimum allowable clearance among agents, something which typically is not the case with gradient-based solutions. This characteristic might be desirable, for instance, when considering multi-robot systems in confined environments. 
\item[(vi)] being a non-gradient vector field approach, the technical developments are based on set invariance concepts rather than Lyapunov-based methods. This in principle provides less conservative solutions, while it might desirable in extending the method to more complicated dynamical models. 
\end{enumerate}   

Compared to our earlier work, the vector field construction presented here is not the same with the one in \cite{Panagou_Kumar_TRO14}. Furthermore, the proposed construction, coordination protocol and technical developments are not the same with the ones in \cite{PanagouCDC13}. Moreover, since it offers feedback solutions with certain convergence guarantees, it can be used as a basis in constrained model predictive control designs \cite{Panagou_ACC2013}, which are appropriate for uncertain environments. The case of mixed environments, i.e., of multiple agents operating among physical obstacles under uncertainty, are not considered in this paper and this topic is left open for future research. 

Part of this work has appeared in \cite{Panagou_ICRA14}. The current paper additionally includes: \begin{inparaenum}\item[(i)] a detailed presentation of the overall method both for the static and the dynamic case, along with the proofs which have been omitted in the conference version in the interest of space, \item[(ii)] more simulation results which demonstrate the efficacy of the method in static and dynamic environments.\end{inparaenum}

The paper is organized as follows: Section \ref{Critical Points of Vector Fields} includes a brief overview of the notions regarding the topology of two-dimensional vector fields that are used throughout the paper. Section \ref{Navigation vector fields} characterizes the singular points of our vector fields \ac{wrt} the parameter $\lambda$, while section \ref{Feedback Motion Plans} presents the blending mechanism among vector fields, the construction of the almost global feedback motion plans and the underlying control design, along with simulation results in static obstacle environments. Section \ref{Simulations} presents the extension of the method to the distributed coordination and collision-free motion of multiple agents under various sensing/communication patterns. Our conclusions and thoughts on future work are summarized in Section \ref{Conclusions}.

\section{Singular points of vector fields}\label{Critical Points of Vector Fields}
This section provides an overview of notions from vector field topology. For more information the reader is referred to \cite{Boothby, Henle, Smooth_Manifolds}.

\begin{vectorfield}
\textnormal{A vector field on an open subset $U\subset \R^n$ is a function which assigns to each point $p\in U$ a vector $X_p\in T_{p}(\R^n)$. A vector field on $\R^n$ is $C^\infty$ (smooth) if its components relative to the canonical basis are $C^\infty$ functions on $U$.}
\end{vectorfield}
\begin{integralcurve}
\textnormal{Given a $C^\infty$ vector field $X$ on $\R^n$, a curve $t\to F(t)$ defined on an open interval $J$ of $\R$ is an integral curve of $X$ if $\frac{\textrm dF}{\textrm dt}=X_{F(t)}$ on $J$.} 
\end{integralcurve}
\begin{singularpoint}
\textnormal{A point $p$ of $U$ at which $X_p=0$ is called a singular, or critical, point of the vector field.} 
\end{singularpoint}

\textbf{Center-type and non-center type singularities:}
Singular points are typically distinguished to those that are reached by no integral curve (called \emph{center} type) and those that are reached by at least two integral curves (called \emph{non-center} type). In the case of a center type singularity, one can find a neighborhood of the singular point where all integral curves are closed, inside one another, and contain the singular point into their interior. In the case of non-center type singularities, one has that at least two integral curves converge to the singular point. 
The local structure of a non-center type singularity is analyzed by considering the behavior of all the integral curves which pass through the neighborhood of the singular point. This neighborhood is made of several curvilinear sectors. A curvilinear sector is defined as the region bounded by a circle $C$ of arbitrary small radius, and two integral curves, $S$ and $S'$, which both converge (for either $t\rightarrow+\infty$, or $t\rightarrow-\infty$) to the singular point. The integral curves passing through the open sector $g$ (i.e., the integral curves except for $S$, $S'$) determine the following three possible types of curvilinear sectors \cite{Tricoche}:
\begin{inparaenum}
\item [(i)] \emph{Elliptic} sectors: all integral curves begin and end at the critical point.
\item [(ii)] \emph{Parabolic} sectors: just one end of each integral curve is at the critical point.
\item [(iii)] \emph{Hyperbolic} sectors: the integral curves do not reach the critical point at all.
\end{inparaenum}
The integral curves that separate each sector from the next are called separatrixes, see also Fig. \ref{fig:isolated}. 
\begin{figure}[h]
\centering
\includegraphics[width=0.95\columnwidth,clip]{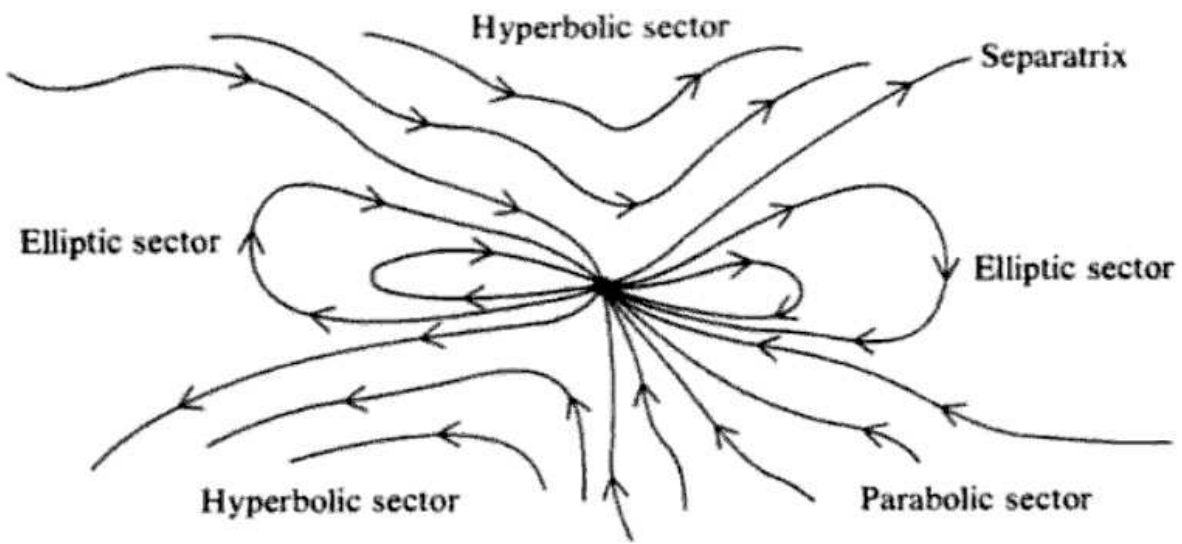}
\caption{A typical isolated critical point. Image taken from \cite{Henle}.}
\label{fig:isolated}
\end{figure}

\textbf{First-order and high-order singularities:}
A singular point $p$ of a vector field $X$ on $\R^2$ is called a \emph{first-order} singular point if the Jacobian matrix $\bm J_X(\cdot)$ of the vector field $X$ does not vanish (i.e., is nonsingular) on $p$, i.e., if: $\det\left(\bm J_X(p)\right)\neq 0$; otherwise the singular point is called \emph{high-order} singular point.

\section{Navigation via vector fields}\label{Navigation vector fields}

Consider the motion of a robot with unicycle kinematics in an environment $\mathcal W$ with $N$ static obstacles. The equations of motion read:
\begin{equation}
\label{unicycle}
\begin{bmatrix}\dot x \\ \dot y \\ \dot \theta \end{bmatrix}=\begin{bmatrix}\cos \theta & 0 \\\sin \theta & 0 \\0 & 1 \end{bmatrix} \begin{bmatrix}u \\ \omega \end{bmatrix},
\end{equation}
where $\bm{q}=\left[\bm r^T\;\;\theta\right]^T$ is the configuration vector, $\bm{r}=\left[x\;\;y\right]^T$ is the position and $\theta$ is the orientation of the robot \ac{wrt} a global frame $\mathcal G$, and $u$, $\omega$ are the linear and the angular velocity of the robot, respectively.
The robot is modeled as a closed circular disk of radius $\varrho$, and each obstacle $\mathcal O_i$ is modeled as a closed circular disk of radius $\varrho_{oi}$ centered at $\bm r_{oi}=\left[x_{oi}\;\;y_{oi}\right]^T$, $i\in\{1,\dots,N\}$. Denote $\mathcal O_i=\{\bm r\in \R^2 \;|\; \|\bm r-\bm r_{oi}\|\leq \varrho_{oi}\}$. 

\subsection{A family of vector fields for robot navigation}

We consider the class of vector fields $\mathbf F:\R^2\rightarrow\R^2$ given by \eqref{2-d dipolar}. The vector field components $\F_x$, $\F_y$ read:
\begin{subequations}
\label{vector field components}
\begin{align}
\F_x &= (\lambda - 1) p_x x^2 + \lambda p_y x y - p_x y^2,\\
\F_y &= (\lambda - 1) p_y y^2 + \lambda p_x x y - p_y x^2.
\end{align}
\end{subequations}

\begin{critical point}
\textnormal{The origin $\bm r=\bm 0$ is the unique singular point of the vector field $\mathbf F$ \eqref{2-d dipolar} if and only if $\lambda\neq 1$.}
\end{critical point}
\begin{proof}
\textnormal{It is straightforward to verify that $\bm r=\bm 0$ is a singular point of $\mathbf F$. Let us write the vector field components \eqref{vector field components} of $\mathbf F$ in matrix form as:
\begin{align}
\label{vector valued map}
\begin{bmatrix}\F_x\\\F_y\end{bmatrix}=\underbrace{\begin{bmatrix}(\lambda-1)x^2-y^2&\lambda xy\\ \lambda xy&(\lambda-1)y^2-x^2\end{bmatrix}}_{\bm A(\lambda,\bm r)}\begin{bmatrix}p_x\\p_y\end{bmatrix}.
\end{align}
The determinant of the matrix $\bm A(\lambda,\bm r)$ is:
$\det(\bm A(\lambda,\bm r))=-(\lambda-1)(x^2+y^2)^2.$
This implies that $\bm A(\lambda,\bm r)$ is nonsingular away from the origin $\bm r=\bm 0$ if and only if $\lambda\neq1$. Therefore, for $\lambda\neq 1$ and $\bm r\neq\bm 0$, one has $\mathbf F=\bm 0$ if and only if $\bm p=\bm 0$. Since $\bm p\neq \bm 0$ by definition, if follows that the vector field $\mathbf F$ is nonsingular everywhere but the origin $\bm r=\bm 0$, as long as $\lambda\neq 1$.}
\end{proof}

\vspace{1mm}
\begin{reflection}\label{reflection}
\textnormal{The line $l: y=\tan\varphi\; x,$ where $\tan\varphi \triangleq \frac{p_y}{p_x}$, is an axis of reflection, or mirror line, for $\mathbf F$ \eqref{2-d dipolar}.}
\end{reflection}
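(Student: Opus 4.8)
The plan is to prove that $\mathbf F$ is \emph{equivariant} under reflection about $l$, which is the precise sense in which $l$ is a mirror line for a vector field. Let $R$ denote the linear reflection of $\R^2$ about $l$. I must establish
\[
\mathbf F(R\bm r) = R\,\mathbf F(\bm r) \qquad \text{for all } \bm r\in\R^2 .
\]
This equivariance is exactly what guarantees that whenever $\gamma(t)$ is an integral curve of $\mathbf F$, the reflected curve $R\gamma(t)$ is again an integral curve; hence the phase portrait is invariant under reflection about $l$, i.e., $l$ is an axis of symmetry for $\mathbf F$.

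First I would record two elementary properties of $R$. Since $l$ passes through the origin along the direction of $\bm p$ (its slope is $\tan\varphi=p_y/p_x$), $R$ is an orthogonal, symmetric matrix, so that $R^T=R$ and $R^T R = I$; and since $\bm p$ lies on the axis, it is fixed, $R\bm p=\bm p$. Explicitly,
\[
R=\begin{bmatrix}\cos 2\varphi & \sin 2\varphi\\ \sin 2\varphi & -\cos 2\varphi\end{bmatrix},
\]
from which $\det R = -1$, $R^2=I$, and $R\bm p=\bm p$ are immediate.

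Then I would substitute $R\bm r$ into the coordinate-free form \eqref{2-d dipolar} and simplify using these properties. The two scalar factors are invariant: $(R\bm r)^T(R\bm r)=\bm r^T R^T R\,\bm r=\bm r^T\bm r$ by orthogonality, and $\bm p^T(R\bm r)=(R\bm p)^T\bm r=\bm p^T\bm r$ using symmetry together with $R\bm p=\bm p$. Hence
\[
\mathbf F(R\bm r)=\lambda(\bm p^T\bm r)(R\bm r)-\bm p(\bm r^T\bm r),
\]
while applying $R$ to $\mathbf F(\bm r)$ and using $R\bm p=\bm p$ gives
\[
R\,\mathbf F(\bm r)=\lambda(\bm p^T\bm r)(R\bm r)-(R\bm p)(\bm r^T\bm r)=\lambda(\bm p^T\bm r)(R\bm r)-\bm p(\bm r^T\bm r),
\]
which coincides with the previous display. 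This closes the argument.

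The computation itself is routine; the only genuine subtlety is conceptual, and I expect it to be the main point to get right. One must reflect both the base point \emph{and} the vector, i.e., insist on the equivariance $\mathbf F(R\bm r)=R\,\mathbf F(\bm r)$ rather than a naive pointwise invariance $\mathbf F(R\bm r)=\mathbf F(\bm r)$, since a vector field transforms as a vector (not as a scalar) under the reflection. Fixing this definition is precisely what makes the compact coordinate-free calculation above the correct identity to verify; a component-wise check directly from \eqref{vector field components} is also possible but considerably less transparent.
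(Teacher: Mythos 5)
Your proof is correct, and it takes a genuinely different route from the paper's. You verify the equivariance $\mathbf F(R\bm r)=R\,\mathbf F(\bm r)$ abstractly, using only that the reflection $R=\bm H(2\varphi)$ is orthogonal and symmetric and fixes $\bm p$ (since $\bm p$ spans the axis $l$); the two scalar factors $\bm r^T\bm r$ and $\bm p^T\bm r$ are then invariant and the identity drops out in two lines from the coordinate-free form \eqref{2-d dipolar}. The paper instead works in polar coordinates: it parametrizes a mirror pair $A=(R\cos a, R\sin a)$, $B=(R\cos(2\varphi-a),R\sin(2\varphi-a))$, computes $\mathbf F_A$ and $\mathbf F_B$ explicitly, and exhibits each as $\bm v_p+\bm v_o$ and $\bm v_p+\bm H(2\varphi)\bm v_o$ with $\bm v_p$ parallel to $\bm p$, concluding that $\mathbf F_B$ is the reflection of $\mathbf F_A$ — the same equivariance statement, established by direct trigonometric computation. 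Your argument is shorter and cleaner, and your remark that one must reflect both the base point and the vector (equivariance, not pointwise invariance) is exactly the right conceptual point; what the paper's longer computation buys in exchange is the explicit decomposition of $\mathbf F$ into a fixed component $\bm v_p$ along $\bm p$ and a component $\bm v_o$ whose angle varies as $\varphi-2a$, which the paper reuses informally when discussing the geometry of the integral curves. Either proof is acceptable; there is no gap in yours.
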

\begin{proof}
\textnormal{Consider two points $A$, $B$ of equal distance and on opposites sides \ac{wrt} the line $l$ (Fig. \ref{fig:reflection}). Their position vectors $\bm r_A=\left[x_A\;\;y_A\right]^T$, $\bm r_B=\left[x_B\;\;y_B\right]^T$ \ac{wrt} $\mathcal G$ read:
\begin{subequations}
\begin{align}
x_A &= R\cos a,  & \quad y_A &= R\sin a\label{rA}, \\
x_B &= R\cos(2\varphi-a),  & \quad y_B &= R\sin(2\varphi-a)\label{rB},
\end{align}
\end{subequations}
where $(R,a)$, $(R,(2\varphi-a))$ are the polar coordinates of $A$, $B$, respectively.
\begin{figure}[h]
\centering
\includegraphics[width=0.95\columnwidth,clip]{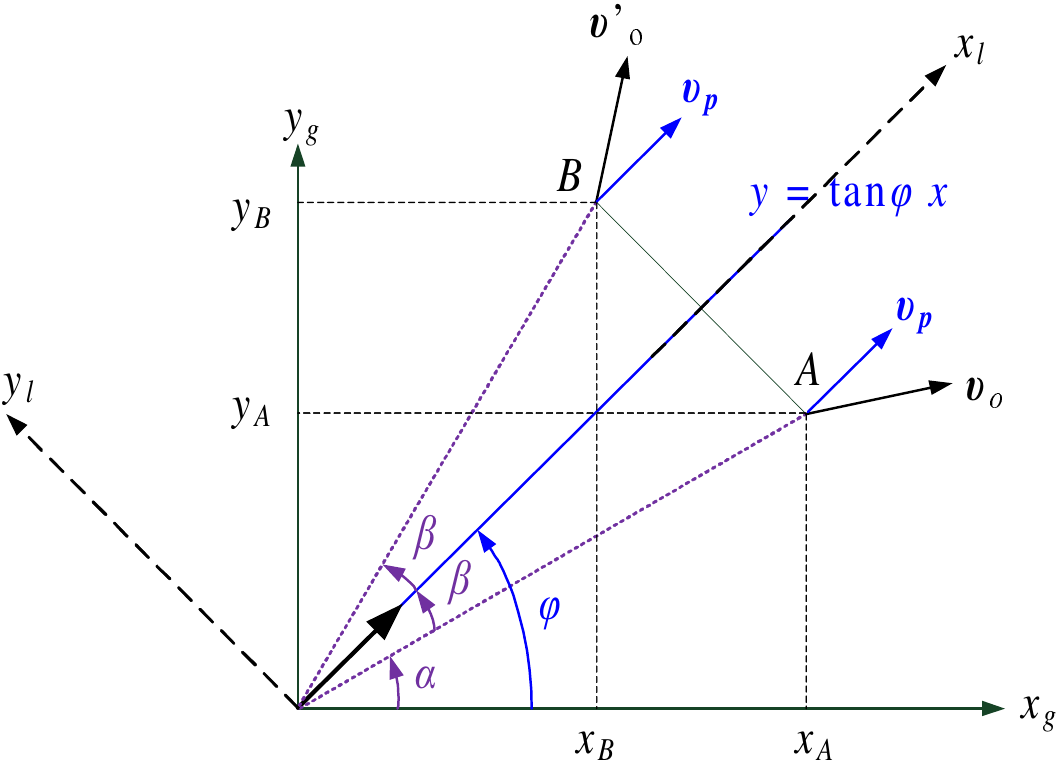}
\caption{The line $l: y=\tan\varphi \; x$, where $\varphi=\arctan(\frac{p_y}{p_x})$, is a reflection (or mirror) line for the vector field $\mathbf F$.} 
\label{fig:reflection}
\end{figure}
We need to prove that the vector $\mathbf F(\bm r_A)$, denoted $\mathbf F_A$, reflects to the vector $\mathbf F(\bm r_B)$, denoted $\mathbf F_B$, \ac{wrt} the line $l: y=\tan\varphi \; x$. Recall that the reflection matrix about the considered line $l$ is:
\begin{align}
\label{reflection matrix}
\bm H(2\varphi)=\begin{bmatrix}\cos2\varphi&\sin2\varphi\\\sin2\varphi&-\cos2\varphi\end{bmatrix}.
\end{align}
Substituting \eqref{rA} into \eqref{vector valued map} and after some standard algebra yields:
\begin{align}
\label{F_A}
\mathbf F_A&=\underbrace{\frac{(\lambda-2)R^2\|\bm p\|}{2}\left[\begin{matrix}\cos\varphi\\\sin\varphi\end{matrix}\right]}_{\bm v_p}+\underbrace{\frac{\lambda R^2\|\bm p\|}{2}\left[\begin{matrix}\cos(\varphi-2a)\\-\sin(\varphi-2a)\end{matrix}\right]}_{\bm v_o},
\end{align}
where $\|\bm p\|=\sqrt{{p_x}^2+{p_y}^2}$. Similarly, substituting \eqref{rB} into \eqref{vector valued map} yields:
\begin{align}
\label{F_B}
\mathbf F_B&=\underbrace{\frac{(\lambda-2)R^2\|\bm p\|}{2}\left[\begin{matrix}\cos\varphi\\\sin\varphi\end{matrix}\right]}_{\bm v_p}\nonumber\\
&+\underbrace{\frac{\lambda R^2\|\bm p\|}{2}\left[\begin{matrix}\cos2\varphi&\sin2\varphi\\\sin2\varphi&-\cos2\varphi\end{matrix}\right]\left[\begin{matrix}\cos(\varphi-2a)\\-\sin(\varphi-2a)\end{matrix}\right]}_{\bm v'_o}.
\end{align}
One has: $\mathbf F_A=\bm v_p+\bm v_o$ and $\mathbf F_B=\bm v_p+\bm v'_o.$ Out of \eqref{F_A}, \eqref{F_B} one gets that $\bm v'_o=\bm H(2\varphi)\bm v_o$, i.e., $\bm v'_o$ is the reflection of the vector $\bm v_o$ about the line $l$. Thus, one may write $$\bm v_o=v^l_{ox} \hat{\bm x}_l + v^l_{oy} \hat{\bm y}_l \;\; \mbox{and} \;\; \bm v'_o=v^l_{ox} \hat{\bm x}_l - v^l_{oy} \hat{\bm y}_l,$$ where $\hat{\bm x}_l$, $\hat{\bm y}_l$ are the unit vectors along the axes $x_l$, $y_l$, respectively, see Fig. \ref{fig:reflection}. Furthermore, $\bm v_p$ is parallel to the vector $\bm p$, i.e., parallel to the candidate reflection line $l$. Consequently, one may write: $\bm v_p=v^l_{px} \hat{\bm x}_l + 0\; \hat{\bm y}_l$. It follows that:
\begin{align*}
\mathbf F_A&=(v^l_{ox}+v^l_{px}) \hat{\bm x}_l + v^l_{oy} \hat{\bm y}_l,\\
\mathbf F_B&=(v^l_{ox}+v^l_{px}) \hat{\bm x}_l - v^l_{oy} \hat{\bm y}_l,
\end{align*}
i.e., that the vector $\mathbf F_B$ is a reflection of vector $\mathbf F_A$ about the line $l$. This completes the proof.
}
\end{proof}

\begin{Remark}
\textnormal{The Jacobian matrix of $\mathbf F$ is singular at $\bm r=\bm 0$, which implies that $\bm r=\bm 0$ is a high-order singularity. Thus, one may expect that the pattern of the integral curves around the singular point will be more complicated compared to those around a first-order singularity, i.e., around nodes, saddles, foci or centers.
}
\end{Remark}

\begin{integral curves}
\textnormal{The equation of the integral curves of $\mathbf F$ for $\bm p= \left[1\;\;0\right]^T$ is given as:
\begin{align}
\label{integral curve equation}
{(x^2+y^2)}^{\frac{\lambda}{2}} = c \; y^{(\lambda-1)}, \;\; c\in \R.
\end{align}
}
\end{integral curves}

\begin{proof}
\textnormal{Consider the polar coordinates $(r\cos\phi, r\sin\phi)$ of a point $(x,y)$ where:
\begin{align}
\label{polar coordinates}
r=\sqrt{x^2+y^2}, \quad \cos\phi=\frac{x}{r}, \quad \sin\phi=\frac{y}{r}.
\end{align}
After substituting \eqref{polar coordinates} and $p_x=1$, $p_y=0$ into \eqref{vector valued map} the vector field components read:
\begin{subequations}
\label{polar vector field}
\begin{align}
\F_{x} &= r^2 \left((\lambda - 1)\cos^2\phi -\sin^2\phi\right), \\
\F_{y} &= r^2 \left(\lambda \cos\phi \sin\phi\right).
\end{align}
\end{subequations}
An integral curve of \eqref{2-d dipolar} is by definition the solution of the system of ordinary differential equations:
\begin{align}
\label{integral curve}
\begin{matrix}
\frac{dx}{dt}=\F_x \\
\frac{dy}{dt}=\F_y
\end{matrix}, \quad \mbox{which further reads:} \quad  \frac{dx}{dy}=\frac{\F_x}{\F_y},
\end{align}
while the differentials between Cartesian and polar coordinates satisfy the formula:
\begin{align}
\label{differentials}
\begin{bmatrix}dr\\rd\phi\end{bmatrix}=\begin{bmatrix}\cos\phi&\sin\phi\\-\sin\phi&\cos\phi\end{bmatrix}\begin{bmatrix}dx\\dy\end{bmatrix}.
\end{align}
Plugging \eqref{differentials}, \eqref{polar vector field} into \eqref{integral curve} results in:
\begin{align*}
\frac{1}{r}\;dr&=(\lambda-1)\frac{\cos\phi}{\sin\phi}\;{d\phi},
\end{align*}
while integrating by parts yields:
\begin{align*}
\ln(r)&=(\lambda-1)\ln(\sin\phi)+\ln(c) \Rightarrow \\
\ln(r)&=\ln\left(c\;\sin^{(\lambda-1)}\phi\right) \Rightarrow \\
r&= c \;\sin^{(\lambda-1)}\phi \Rightarrow r= c \; \frac{y^{(\lambda-1)}}{r^{(\lambda-1)}} \Rightarrow \\
r^\lambda&= c \; y^{(\lambda-1)} \Rightarrow {(x^2+y^2)}^{\frac{\lambda}{2}} = c \; y^{(\lambda-1)}, \; \mbox{where} \; c\in\R.
\end{align*}
This completes the proof.}
\end{proof}

\vspace{1mm}
\begin{Remark}
It is straightforward to verify that:
\begin{itemize}
\item For $\lambda=0$, \eqref{integral curve equation} reduces to $y=c$, i.e., the integral curves are straight lines parallel to $\bm p=\left[1\;\;\;0\right]^T$.
\item For $\lambda=1$, \eqref{integral curve equation} reduces to $\sqrt{x^2+y^2}=c$, i.e., the integral curves are circles of radius $\sqrt{c}$, where $c>0$, centered at the origin $(x,y)=(0,0)$.
\end{itemize}
\end{Remark}

\subsection{Attractive vector fields}
Let us consider the case $\lambda=2$. Take for simplicity $\bm p=\left[1\;\;0\right]^T$ and write the vector field components as:
\begin{subequations}
\label{nominal dipole}
\begin{align}
\F_x &= x^2 - y^2,\\
\F_y &= 2 x y
\end{align}
\end{subequations}
Following \cite{Henle}, the singular point $\bm r=\bm 0$ of \eqref{nominal dipole} is a dipole. More specifically, the vector field \eqref{nominal dipole} has two elliptic sectors, with the axis $y=0$ serving as the separatrix. This implies that all integral curves \emph{begin and end at} the singular point, \emph{except for} the separatrix $y=0$. The separatrix converges to $\bm r=\bm 0$ for $x<0$ and diverges for $x>0$ (Fig. \ref{fig:dip2}). Out of Theorem \ref{reflection}, the separatrix $y=0$ is the reflection line for the vector field \eqref{nominal dipole}.
\begin{figure}[h]
\centering
\includegraphics[width=0.7\columnwidth,clip]{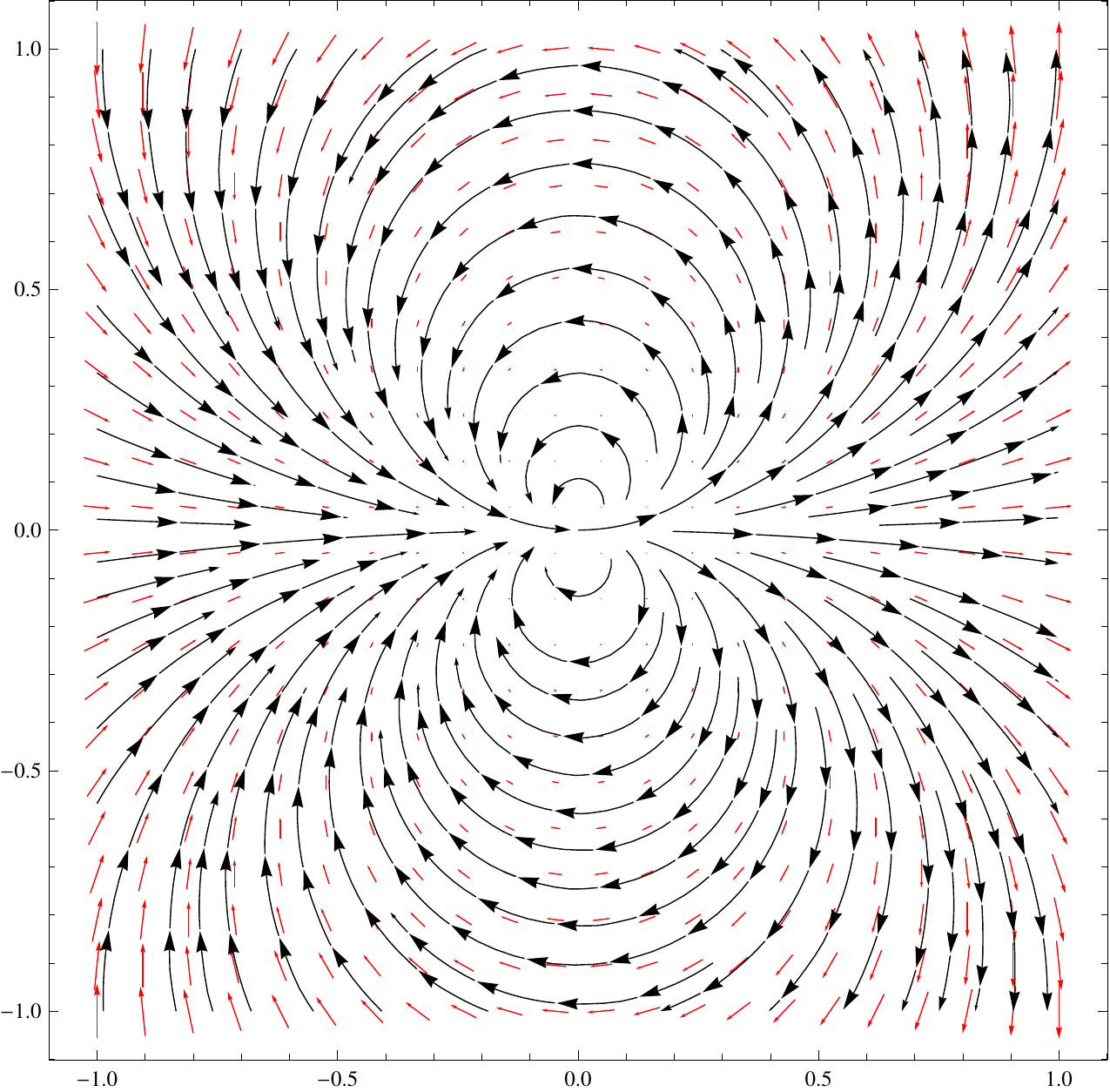}
\caption{The integral curves of \eqref{2-d dipolar} for $\lambda=2$, $p_x=1$, $p_y=0$.} 
\label{fig:dip2}
\end{figure}

Furthermore, Theorem \ref{reflection} implies that the axis the vector $\bm p\neq \bm 0$ lies on is, in general, a reflection line for \eqref{2-d dipolar}. This means that the resulting integral curves are symmetric \ac{wrt} the vector $\bm p\in\R^2$. 
In that sense, any of the integral curves of $\mathbf F$ offers a path to $\bm r=\bm 0$, while at the same time the direction of the vector $\bm p$ dictates the symmetry axis of the integral curves \ac{wrt} the global frame $\mathcal G$.

\vspace{1mm}
Therefore, defining a feedback motion plan for steering the unicycle to a goal configuration $\bm q_g=\left[\bm r_g^T \;\;\theta_g\right]^T$ has been based in earlier work of ours' \cite{Panagou_CDC11A} on the following simple idea:
Pick a vector field $\mathbf F$ out of \eqref{2-d dipolar} in terms of $(\bm r-\bm r_g)$,\footnote{This is to have the unique singular point of $\mathbf F$ coinciding with the desired position $\bm r_g$.} with $\lambda = 2$ and $\bm p=\left[p_x\;\;p_y\right]^T$, so that the direction of the vector $\bm p$ coincides with the goal orientation: $\varphi \triangleq \arctan(\frac{p_y}{p_x})= \theta_g$. Then, the integral curves serve as a reference to steer the position trajectories $\bm r(t)$ to the goal position $\bm r_g$, and the orientation trajectories $\theta(t)$ to the goal orientation $\theta_g$.


\subsection{Repulsive vector fields}
Let us consider the case $\lambda=1$, i.e., the case when the vector field \eqref{2-d dipolar} has multiple singular points. The vector field components read:
\begin{subequations}
\label{multiple singular points}
\begin{align}
\F_x &= p_y x y - p_x y^2,\\
\F_y &= p_x x y - p_y x^2.
\end{align}
\end{subequations}
The vector field \eqref{multiple singular points} vanishes on the set $\mathcal V=\{\bm r\in \R^2 \; | \; p_y x - p_x y=0\}.$ Out of Theorem \ref{reflection}, the singularity set $\mathcal V$ coincides with the reflection line of the vector field \eqref{multiple singular points}. The equation of the integral curves can be computed for $p_y x - p_x y\neq 0$ as: $\frac{dx}{dy}=\frac{y}{-x}\Rightarrow x^2+y^2=c^2$, where $c\in \R$, which implies that the integral curves are circles centered at the origin $\bm r=\bm 0$, see Fig. \ref{fig:multiple}. 
\begin{figure}[h]
\centering
\includegraphics[width=0.7\columnwidth,clip]{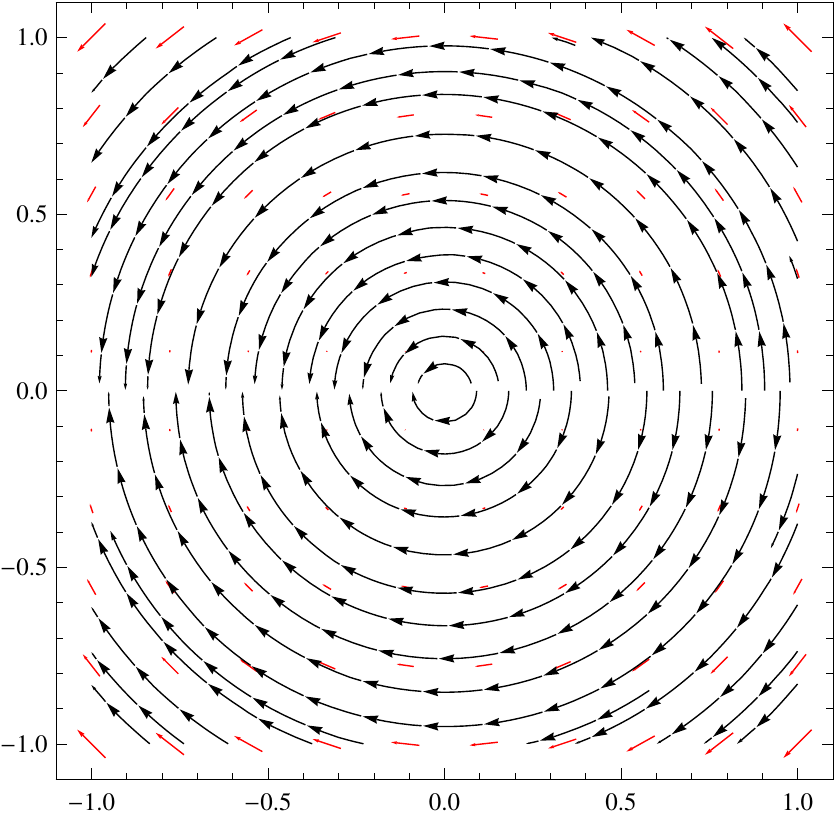}
\caption{The vector field $\mathbf F$ for $\lambda=1$ and $p_x=1$, $p_y=0$.}
\label{fig:multiple}
\end{figure}

\vspace{1mm}
The signum of $x$ (in general, of ${\bm p_i}^T\bm r$) dictates whether the integral curves escape the singularity set $\mathcal V$ (see the half-plane $x>0$) or converge to the singularity set $\mathcal V$ (see the half-plane $x<0$). We say that the singular point $\bm r=\bm 0$ of the vector field \eqref{multiple singular points} is of center type; this means that \emph{no integral curve reaches the singular point.}\footnote{Characterizing this particular singularity as of center type is slightly inconsistent with standard notation, since in this case the singular point $\bm r=\bm 0$ is not isolated.}

Thus, one may employ \eqref{multiple singular points} to define tangential vector fields locally around circular obstacles.


\section{Almost global feedback motion plans}\label{Feedback Motion Plans}

Given the class of attractive and repulsive vector fields, the idea on defining an almost global feedback motion plan $\mathbf F^\star$ on the collision-free space $\mathcal F$ is now simple: we pursue to combine an attractive-to-the-goal vector field $\mathbf F_g$ with (local) repulsive vector fields $\mathbf F_{oi}$ around each obstacle $\mathcal O_i$, so that the integral curves of $\mathbf F^\star$:
\begin{inparaenum}
\item converge to the goal $\bm q_g$, and
\item point into the interior of $\mathcal F$ on the boundaries of the obstacles $\mathcal O_i$.
\end{inparaenum}
The vector field $\mathbf F^\star$ can then serve as a feedback motion plan on $\mathcal W$.

\vspace{1mm}
\begin{Remark}
Combining the vector fields $\mathbf F_g$, $\mathbf F_{oi}$ should be done carefully so that the resulting vector field $\mathbf F^\star$ does not have any undesired singularities on $\mathcal F$. For this reason, we consider the normalized unit vector fields:
\begin{subequations}
\begin{align}
\mathbf F^n_g=\left\{
                \begin{array}{cc}
                  \frac{\mathbf F_g}{\|\mathbf F_g\|}, & \mbox{ for } \; \bm r\neq\bm 0; \\
                  \bm 0, & \mbox{ for } \bm r=\bm 0.
                \end{array}
              \right.\label{normalized attractive}\\
\mathbf F^n_{oi}=\left\{
                \begin{array}{cc}
                 \frac{\mathbf F_{oi}}{\|\mathbf F_{oi}\|}, & \mbox{ for } \; \bm r\notin\mathcal V_i; \\
                  \bm 0, & \mbox{ for } \bm r\in\mathcal V_i.
                \end{array}
              \right.
\end{align}
\end{subequations}
respectively, when defining the blending mechanism, see later on in Section \ref{blending}.
\end{Remark}

\subsection{Attractive vector field to the goal}
Without loss of generality we assume that $\bm q_g= \bm 0$. An attractive-to-the-goal vector field $\mathbf F_g$ may be taken out of \eqref{2-d dipolar} for $\lambda=2$, $\bm p_g=\left[1\;\;0\right]^T$, which yields the vector field \eqref{nominal dipole}. The components of the normalized vector field $\mathbf F^n_g$ taken out of \eqref{normalized attractive} for $x\neq0$, $y\neq0$ read: $$\F^n_{gx} = \frac{x^2 - y^2}{x^2+y^2}, \quad \F^n_{gy} = \frac{2 x y}{x^2+y^2}.$$

\subsection{Repulsive vector field \ac{wrt} a circular obstacle}
Consider an obstacle $\mathcal O_i$ and the region 
$\mathcal Z_i : \left\{\bm r\in \R^2 \; | \; \|\bm r-\bm r_{oi}\|\leq \varrho_{\mathcal Zi}\right\}$, where $\varrho_{\mathcal Zi}=\varrho_{oi}+\varrho+\varrho_{\varepsilon}$, see Fig. \ref{fig:repulsive}. The parameter $\varrho_{\varepsilon}\geq 0$ is the minimum distance that the robot is allowed to keep \ac{wrt} the boundary of the obstacle.
\begin{figure}
\centering
\includegraphics[width=0.715\columnwidth,clip]{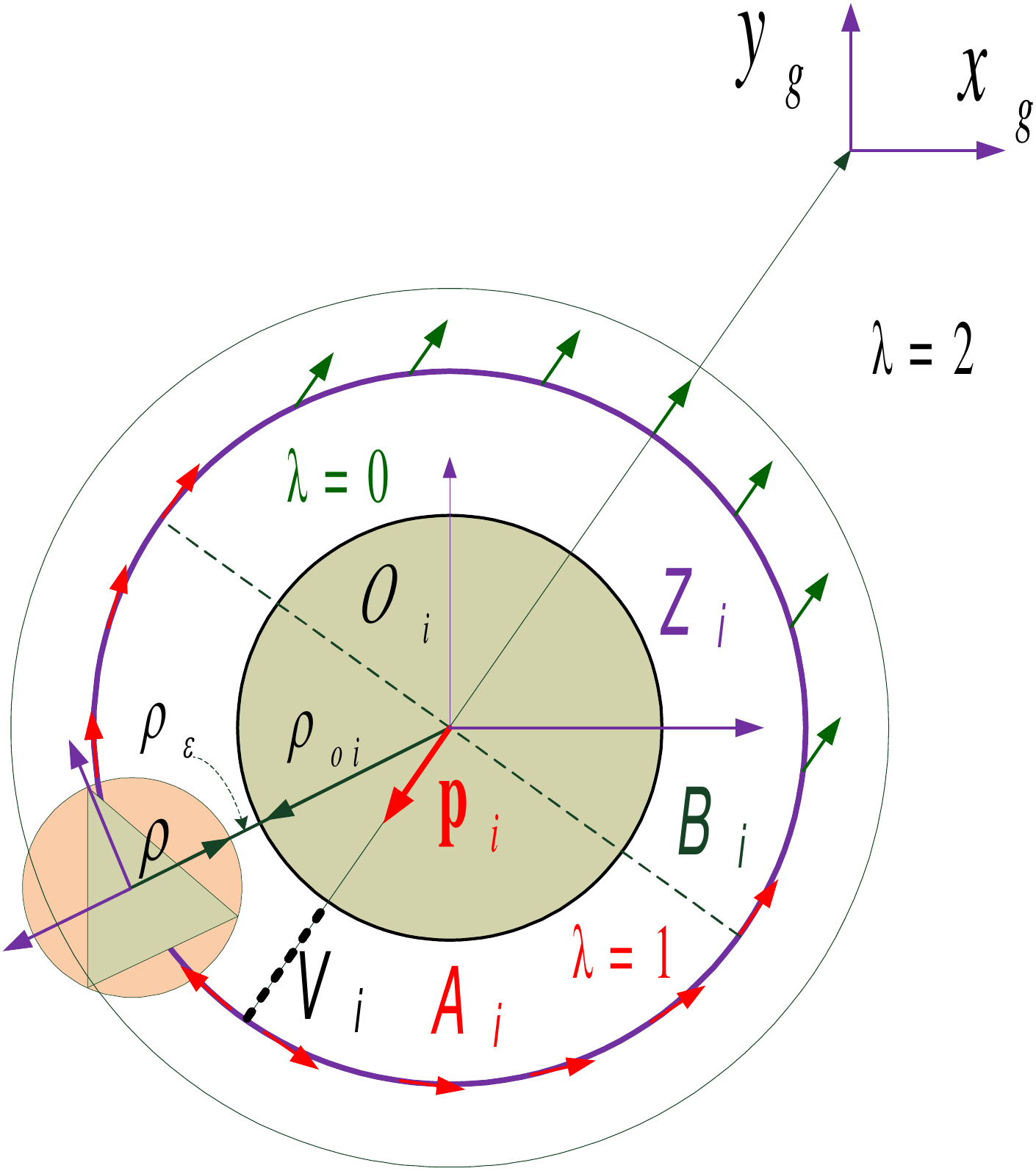}
\caption{Defining a repulsive vector field $\mathbf F_{oi}$ around the obstacle $\mathcal O_i$. Note that we take the vector field \eqref{2-d dipolar} with $\lambda=1$ in region $\mathcal A_i$ and with $\lambda=0$ in region $\mathcal B_i$.}
\label{fig:repulsive}
\end{figure}

A repulsive vector field \ac{wrt} the point $\bm r_{oi}$ can be picked out of \eqref{multiple singular points} for $\bm p_i=\left[p_{xi}\;\;p_{yi}\right]^T$, where $p_{xi}=\cos\phi_i$, $p_{yi}=\sin\phi_i$, $\phi_i=\atan2(-y_{oi},-x_{oi})+\pi$ as:
\begin{subequations}
\label{repulsive}
\begin{align}
\F_{oxi} &= p_{yi} (x-x_{oi}) (y-y_{oi}) - p_{xi} (y-y_{oi})^2,\\
\F_{oyi} &= p_{xi} (x-x_{oi}) (y-y_{oi}) - p_{yi} (x-x_{oi})^2.
\end{align}
\end{subequations}
Note that the vector $\bm p_i$ is picked such that it lies on the line connecting the center $\bm r_{oi}$ of the obstacle with the goal point $\bm r_g=\bm 0$. Therefore, the singularity set $\mathcal V_i$ of \eqref{repulsive} lies by construction on this line, which is also the reflection axis of the vector field \eqref{repulsive}.
Denote $\mathcal A_i=\{\bm r \in \mathcal Z_i \; | \; {\bm p_i}^T (\bm r-\bm r_{oi})\geq0\}$, $\mathcal B_i=\{\bm r \in\mathcal Z_i \; | \; {\bm p_i}^T (\bm r-\bm r_{oi})<0\}$, where $\mathcal Z_i=\mathcal A_i \bigcup \mathcal B_i$, and consider the behavior of the integral curves around the singularity set $\mathcal V_i$. The integral curves depart from the singularity set $\mathcal V_i$ in the region $\mathcal A_i$ (see the red vectors around $\mathcal V_i$ in Fig. \ref{fig:repulsive}), and converge to the singularity set $\mathcal V_i$ in the region $\mathcal B_i$ (the corresponding vectors have \emph{not} been drawn in Fig. \ref{fig:repulsive}). The integral curves in region $\mathcal A_i$ render safe, tangential reference paths around the obstacle $\mathcal O_i$. However, their pattern in region $\mathcal B_i$ is undesirable, since it may trap the system trajectories $\bm r(t)$ away from $\bm r_g$. To overcome this, in region $\mathcal B_i$ we define a vector field out of \eqref{2-d dipolar} for $\lambda=0$ and $\bm p_{i}$ as before, whose vector field components read:
\begin{subequations}
\label{repulsive 2}
\begin{align}
\F_{oxi} &= -p_{xi} (x-x_{oi})^2 - p_{xi} (y-y_{oi})^2,\\
\F_{oyi} &= -p_{yi} (x-x_{oi})^2 - p_{yi} (y-y_{oi})^2.
\end{align}
\end{subequations}
This vector field is co-linear with $\bm p_i$ and vanishes at the unique singular point $\bm r=\bm r_{oi}$.
\vspace{1mm}
\begin{Remark}
The transition of the integral curves between regions $\mathcal A_i$, $\mathcal B_i$ is smooth, since the vectors at the points where ${\bm p_i}^T (\bm r-\bm r_{oi})=0$ coincide.
\end{Remark}

In summary, the vector field $\mathbf F_{oi}$ around a circular obstacle $\mathcal O_i$ is picked out of the family of vector fields \eqref{2-d dipolar} as:
\begin{align}
\label{repulsive full}
\mathbf F_{oi}=\left\{
                 \begin{array}{cc}
                   \mathbf F_{(\lambda=1)}\left(\bm\delta\bm r_i\right), & \; \hbox{for}\;\; {\bm p_i}^T (\bm\delta\bm r_i)\geq0;\\
                   \mathbf F_{(\lambda=0)}\left(\bm\delta\bm r_i\right), & \; \hbox{for}\;\; {\bm p_i}^T (\bm\delta\bm r_i)<0,
                 \end{array}
               \right.
\end{align}
where $\bm\delta\bm r_i\triangleq \bm r-\bm r_{oi}$, $\phi_i\triangleq\atan2(-y_{oi},-x_{oi})+\pi$, $\bm p_i=\left[\cos\phi_i\;\;\sin\phi_i\right]^T$. The normalized vector field then reads:
\begin{align}
\label{repulsive normalized full}
\mathbf F^n_{oi}=
\left\{
\begin{array}{cccc}
\frac{\mathbf F_{(\lambda=0)}\left(\bm\delta\bm r_i\right)}{\|\mathbf F_{(\lambda=0)}\left(\bm\delta\bm r_i\right)\|}, & \; \hbox{for}\;\; {\bm p_i}^T (\bm\delta\bm r_i)<0;\\
\frac{\mathbf F_{(\lambda=1)}\left(\bm\delta\bm r_i\right)}{\|\mathbf F_{(\lambda=1)}\left(\bm\delta\bm r_i\right)\|}, & \; \hbox{for}\;\; {\bm p_i}^T (\bm\delta\bm r_i)\geq0 \mbox{ and } \bm r\notin \mathcal V_i;\\
\bm 0, & \; \hbox{for}\;\; {\bm p_i}^T (\bm\delta\bm r_i)\geq0 \mbox{ and } \bm r\in \mathcal V_i;
\end{array}\right.
\end{align}

\subsection{Blending attractive and repulsive vector fields}\label{blending}
Define the obstacle function $\beta_i(\cdot):\R^2\rightarrow\R$ as:
\begin{align}
\label{obstacle constraint}
\beta_i(\bm r,\bm r_{oi}, \varrho_{oi})={\varrho_{oi}}^2-\|\bm r-\bm r_{oi}\|^2,
\end{align}
which is positive in the interior $\mathrm{Int}(\mathcal O_i)$ of the obstacle, zero on the boundary $\partial\mathcal O_i$ of the obstacle, and negative everywhere else. Denote the value of the constraint function $\beta_i$ on the boundary $\partial \mathcal Z_i$ of the region $\mathcal Z_i$ as $\beta_{i\mathcal Z}=- 2{\varrho _{oi}}\left( {\varrho  + {\varrho _\varepsilon }} \right) - {\left( {\varrho  + {\varrho _\varepsilon }} \right)^2}.$ 

The repulsive vector field $\mathbf F^n_{oi}$ is then locally defined on the set: $\left(\mathcal Z_i\setminus\mathrm{Int}(\mathcal O_i)\right)=\{\bm r\in\R^2 \;|\; \beta_{i\mathcal Z}\leq \beta_i(\bm r)\leq0\}.$ At the same time, the attractive vector field $\mathbf F^n_g$ should be defined exterior to $\mathcal Z_i$, i.e., for $\beta_i(\bm r)<\beta_{i\mathcal Z}$. To encode this, define the smooth bump function $\sigma_i(\cdot):\R^2\rightarrow[0,1]$:
\begin{align}
\label{bump function}
\sigma_i = \left\{
              \begin{array}{ll}
                1, & \; \hbox{for }\;\beta_i(\bm r)\leq \beta_{i\mathcal F}; \\
                a{\beta_i}^3 + b{\beta_i}^2 + c\beta_i + d, & \; \hbox{for }\;\beta_{i\mathcal F}<\beta_i(\bm r)<\beta_{i\mathcal Z}; \\
                0, & \; \hbox{for }\;\beta_{i\mathcal Z}\leq\beta_i(\bm r);
              \end{array}
            \right.
\end{align}
where $\beta_{i\mathcal Z}$ is the value of \eqref{obstacle constraint} at distance $\varrho_{\mathcal Zi}$ \ac{wrt} $\bm r_{oi}$, $\beta_{i\mathcal F}$ is the value of \eqref{obstacle constraint} at some distance $\varrho_{\mathcal Fi}>\varrho_{\mathcal Zi}$ \ac{wrt} $\bm r_{oi}$, and the coefficients $a$, $b$, $c$ and $d$ are computed as:
\begin{align*}
a &= \frac{2}{(\beta_{i\mathcal Z}-\beta_{i\mathcal F})^3}, & b &= -\frac{3(\beta_{i\mathcal Z}+\beta_{i\mathcal F})}{(\beta_{i\mathcal Z}-\beta_{i\mathcal F})^3},\\
c &= \frac{6\beta_{i\mathcal Z}\beta_{i\mathcal F}}{(\beta_{i\mathcal Z}-\beta_{i\mathcal F})^3}, & d &= \frac{{\beta_{i\mathcal Z}}^2(\beta_{i\mathcal Z}-3\beta_{i\mathcal F})}{(\beta_{i\mathcal Z}-\beta_{i\mathcal F})^3},
\end{align*}
so that \eqref{bump function} is a $\mathcal C^2$ function. Having this at hand, and inspired by \cite{Lindemann_Lavalle_IJRR09}, one may now define the vector field:
\begin{align}
\label{one obstacle}
\mathbf F_i = \sigma_i \mathbf F^n_g + (1-\sigma_i) \mathbf F^n_{oi}.
\end{align}
\begin{1obstacle}\label{1obstacle}
\textnormal{The vector field \eqref{one obstacle} is:
\begin{enumerate}
\item[(\emph{i})] Attractive to the goal $\bm q_g$ for $\|\bm r-\bm r_{oi}\|\geq \varrho_{\mathcal Fi}$, i.e., for $\beta_{i}(\bm r)\leq\beta_{\mathcal Fi}$ where $\sigma_i=1$, via the effect of $\mathbf F^n_g$.
\item[(\emph{ii})] Repulsive \ac{wrt} $\mathcal O_i$ for $\varrho_{oi}\leq \|\bm r-\bm r_{oi}\|\leq \varrho_{\mathcal Zi}$, i.e., for $\beta_{\mathcal Zi}\leq \beta_i(\bm r)$ where $\sigma_i=0$, via the effect of $\mathbf F^n_{oi}$.
\item[(\emph{iii})] Nonsingular in the region $\varrho_{\mathcal Zi} < \|\bm r-\bm r_{oi}\| < \varrho_{\mathcal Fi}$, i.e., for $\beta_{\mathcal Fi} < \beta_i(\bm r) < \beta_{\mathcal Zi}$ where $0<\sigma_i<1$.
\item[(\emph{iv})] Safe \ac{wrt} the obstacle $\mathcal O_i$ and convergent to the goal $\bm q_g$ for almost all initial conditions.
\end{enumerate}
\vspace{1mm}
\begin{proof}
The first two arguments have been proved in the previous section. To verify the third argument, consider the norm of vector field $\mathbf F_i$ in the blending region $\mathcal D_i:\{\bm r\in\R^2 \;|\; \varrho_{\mathcal Zi} < \|\bm r-\bm r_{oi}\| < \varrho_{\mathcal Fi}\}$, which reads: 
$$\|\mathbf F_i\|=\sqrt{1-2\sigma_i(1-\sigma_i)+2\sigma_i(1-\sigma_i)\cos\alpha},$$
where $\alpha$ the angle between the vectors $\mathbf F^n_g$, $\mathbf F^n_{oi}$ at some point $\bm r\in\mathcal D_i$. Then, for $\bm r \notin \mathcal V_i$ one has that $\|\mathbf F_i\|$ vanishes at the points where $\sigma_i$ is the solution of: $$2(1-\cos\alpha){\sigma_i}^2-2(1-\cos\alpha)\sigma_i+1=0.$$ The discriminant reads $\Delta=-4(1-\cos\alpha)^2$, which implies that there are no real solutions, i.e., that the vector field $\mathbf F_i$ is nonsingular for $\bm r \notin \mathcal V_i$. Moreover, for $\bm r\in \mathcal V_i$ one has $\mathbf F^n_{oi}=\bm 0$, and therefore: $\|\mathbf F_i\|=\sigma_i\neq0$. 
\vspace{1mm}\\
Finally, to verify the fourth argument, consider first that the integral curves which do not intersect with the blending region $\mathcal D_i$ are convergent by construction to $\bm r_g$. Consider now the boundary 
$$S_i : \{\bm r\in \R^2 \; | \; \|\bm r-\bm r_{oi}\|^2-{\varrho_{\mathcal Fi}}^2=0\}$$ 
of the region $\mathcal D_i$ and let us analyze the behavior of the integral curves on the manifolds:
\begin{align*}
S_i^-&:\{\bm r\in \R^2 \; | \; \|\bm r-\bm r_{oi}\|={\varrho_{\mathcal Fi}}+\delta\varrho\},\\
S_i^+&:\{\bm r\in \R^2 \; | \; \|\bm r-\bm r_{oi}\|={\varrho_{\mathcal Fi}}-\delta\varrho\},
\end{align*} 
with $\delta\varrho>0$ arbitrarily small. After some calculations: 
\begin{align*}
\nabla S_i \; \mathbf F_i&=2(\bm r-\bm r_{oi})^T \mathbf F^n_g,\\ 
\nabla S_i^- \; \mathbf F_i&=2(\bm r-\bm r_{oi})^T \mathbf F^n_g.
\end{align*}
For $\nabla S_i^+ \; \mathbf F_i$, consider the following cases: 
\begin{enumerate}
\item[Case 1.] The vector field $\mathbf F^n_{oi}$ satisfies: $(\bm r-\bm r_{oi})^T \mathbf F^n_{oi}=0,$ and therefore: $$\nabla S_i^+ \; \mathbf F_i=2\sigma_i(\bm r-\bm r_{oi})^T \mathbf F^n_g.$$
Then: $(\nabla S_i^- \; \mathbf F_i)(\nabla S_i^+  \;\mathbf F_i)>0,$ which implies that the integral curves cross the switching surface $S_i$ and enter $\mathcal A_i$. Consider now the behavior of the integral curves in $\mathcal A_i$. Assume that $\nabla S_i^+ \; \mathbf F_i=2\sigma_i(\bm r-\bm r_{oi})^T \mathbf F^n_g>0$; this would imply that $\nabla S_i \; \mathbf F_i>0$ as well, i.e., that the integral curves did not cross $S_i$, a contradiction. Then: 
$$\nabla S_i^+ \; \mathbf F_i=2\sigma_i(\bm r-\bm r_{oi})^T \mathbf F^n_g<0,$$ which yields that the integral curves approach the boundary $$T_i : \{\bm r\in \R^2 \; | \; \|\bm r-\bm r_{oi}\|^2-{\varrho_{\mathcal Zi}}^2=0\}$$ of the blending region $\mathcal D_i$. Denote $$T_i^-:\{\bm r\in \R^2 \; | \; \|\bm r-\bm r_{oi}\|={\varrho_{\mathcal Zi}}+\delta\varrho\}$$ 
and note that: $\nabla T_i^- \; \mathbf F_i=\nabla S_i^+ \; \mathbf F_i<0$, and that $\nabla T_i \; \mathbf F_i=0$, since on $T_i$ one has $\sigma_i=0$. Then, $\mathbf F_i\neq \bm 0$ is tangent to $T_i$, which means that the integral curves slide along $T_i$, until reaching region $\mathcal B_i$.
\begin{Remark}
The integral curves are not defined on the (unique) point on $T_i$ where $\mathbf F_i=\bm 0$. This further implies that system trajectories which either start or reach this point get stuck away from the goal configuration.
\end{Remark}
\emph{Let us now consider the pattern of the integral curves in the vicinity of the singularity and characterize the set of initial conditions from which the system trajectories end there.} It was shown in the previous section that the integral curves around the singularity set $\mathcal V_i$ are departing the set, except for one integral curve which converges to $\mathcal V_i$. For this condition to occur \emph{the goal orientation $\theta_g$ should be co-linear with the line the singularity set $\mathcal V_i$ lies on}. To see why, recall that the vector field in the blending region reads: $\mathbf F_i=\sigma_i \mathbf F^n_{g}$, and that the vector field $\mathbf F^n_{g}$ should point to the singularity set $\mathcal V_i$. Consequently, this condition arises if and only if the obstacle is positioned such that the direction of the vector $\bm p_{i}$ coincides with the direction of the vector $\bm p_g$. 
\emph{Therefore, the set of initial conditions from which the integral curves of $\mathbf F_i$ converge to the singularity set $\mathcal V_i$ is of Lebesgue measure zero.} Note also that if the direction of $\bm p_g$ does not coincide with the direction of $\bm p_i$, 
then the singular points of $\mathbf F_i$ are confined in $\mathcal Z_i$ on a line segment of length $\varrho_{\varepsilon}$, correspond to the initial conditions from which solutions are not defined, and are reached by no integral curve.
\item[Case 2.] In region $\mathcal B_i$ one may follow a similar analysis to conclude that the integral curves exit $\mathcal B_i$.
\end{enumerate}
In summary, the vector field \eqref{one obstacle} is safe and globally convergent almost everywhere, i.e., except for a set of initial conditions of measure zero. 
\end{proof}
}
\end{1obstacle}

\subsection{Motion plan in static obstacle environments}
\begin{Nobstacles}
\textnormal{Assume a workspace $\mathcal W$ of $N$ circular obstacles $\mathcal O_i$, $i\in\{1,\dots,N\}$, positioned such that the inter-obstacle distances $d_{ij}=\|\bm r_{oi}-\bm r_{oj}\|$ satisfy:
\begin{align}
\label{clearance}
d_{ij}\geq \varrho_{\mathcal Zi}+\varrho_{\mathcal Zj}, \; \forall (i,j), \; j\in\{1,\dots,N\}, \; j\neq i.
\end{align}
Then, the vector field $\mathbf F^\star:\R^2\to\R^2$, given as:
\begin{align}
\label{feedback motion plan}
\mathbf F^\star = \prod_{i=1}^N \sigma_i \mathbf F_g + \sum_{i=1}^N (1-\sigma_i) \mathbf F_{oi},
\end{align}
where $\mathbf F_g$ is the normalized attractive vector field \eqref{normalized attractive}, $\mathbf F_{oi}$ is the normalized repulsive vector field \eqref{repulsive normalized full} around an obstacle $\mathcal O_i$, and $\sigma_i$ is the bump function \eqref{bump function} defined in terms of the obstacle function $\beta_i$ given by \eqref{obstacle constraint}, is a safe, almost global feedback motion plan in $\mathcal F$, except for a set of initial conditions of measure zero.}
\end{Nobstacles}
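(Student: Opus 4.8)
The plan is to reduce the $N$-obstacle claim to $N$ decoupled instances of the single-obstacle result already established in Lemma \ref{1obstacle}, with the clearance condition \eqref{clearance} supplying the decoupling. First I would use \eqref{clearance} to show that the regions of influence of distinct obstacles do not interfere: since $d_{ij}=\|\bm r_{oi}-\bm r_{oj}\|\geq \varrho_{\mathcal Zi}+\varrho_{\mathcal Zj}$, the sets $\mathcal Z_i$ are pairwise disjoint, so that at any point $\bm r\in\mathcal F$ at most one bump function has $\sigma_i<1$, while $\sigma_j=1$ (equivalently $1-\sigma_j=0$) for every $j\neq i$. (I would flag here that, for the blending shells out to $\varrho_{\mathcal Fi}$ to be non-overlapping as well, \eqref{clearance} should be read with $\varrho_{\mathcal Fi}$ in place of $\varrho_{\mathcal Zi}$, a minor strengthening that does not alter the construction.) The key algebraic step is then that, substituting $\sigma_j=1$ for $j\neq i$ into \eqref{feedback motion plan}, the product $\prod_{k}\sigma_k$ collapses to $\sigma_i$ and the sum $\sum_k(1-\sigma_k)\mathbf F_{ok}$ to the single term $(1-\sigma_i)\mathbf F_{oi}$, so that locally $\mathbf F^\star=\sigma_i\mathbf F_g+(1-\sigma_i)\mathbf F_{oi}$, which is \emph{precisely} the one-obstacle field \eqref{one obstacle}; away from all influence regions every $\sigma_i=1$ and $\mathbf F^\star=\mathbf F_g$, the pure attractive field.

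With this reduction in place, I would transfer each conclusion of Lemma \ref{1obstacle} to $\mathbf F^\star$. For nonsingularity: inside each region $\mathbf F^\star$ coincides with $\mathbf F_i$, which is nonsingular by part (\emph{iii}) of the lemma except at the isolated, measure-zero singular point it identifies, while outside all regions $\mathbf F^\star=\mathbf F_g$ is nonsingular away from the goal. For safety: on each boundary $\partial\mathcal O_i$ the field equals the repulsive field $\mathbf F^n_{oi}$ and points into the interior of $\mathcal F$ by part (\emph{ii}), so $\mathcal F$ is positively invariant and no integral curve penetrates an obstacle. For convergence: an integral curve either avoids all influence regions and flows to $\bm r_g$ under $\mathbf F_g$, or, by disjointness, it enters the regions one at a time and on each behaves exactly as in part (\emph{iv}) of the lemma, entering, sliding tangentially along the appropriate manifold, and exiting back toward the goal.

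The hard part will be the global convergence argument, namely assembling these local per-obstacle guarantees into one almost-global statement. In particular I must rule out that a trajectory, after escaping obstacle $i$, is funneled indefinitely among the obstacles, forming a limit cycle or a trapping pattern in $\mathcal F$. I would argue this from the dipolar structure of $\mathbf F_g$ (Theorem \ref{reflection}) together with the disjointness of the regions: between influence regions the flow is governed by the attractive field, whose only equilibrium is $\bm r_g$, and each region is traversed at most once and left heading toward the goal, so no recurrent set other than $\bm r_g$ can form; a monotonicity-of-progress argument along the non-overlapping regions would make this rigorous. Finally, the exceptional set is the finite union over $i$ of the measure-zero stable manifolds of the singular points characterized in Lemma \ref{1obstacle}, and a finite union of measure-zero sets is of Lebesgue measure zero; hence $\mathbf F^\star$ is safe and convergent to $\bm q_g$ for almost every initial condition in $\mathcal F$.
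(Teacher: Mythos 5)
Your proposal is correct and follows essentially the same route as the paper's proof: use the clearance condition \eqref{clearance} to decouple the obstacles so that at most one $\sigma_i<1$ at any point, collapse \eqref{feedback motion plan} locally to the one-obstacle field \eqref{one obstacle}, and invoke Lemma \ref{1obstacle}. You are in fact more careful than the paper on two points its one-paragraph proof glosses over --- that non-overlap of the full blending shells requires the clearance to be measured with $\varrho_{\mathcal Fi}$ rather than $\varrho_{\mathcal Zi}$, and that recurrent wandering among several obstacles must be explicitly ruled out --- both of which are minor gaps in the published argument rather than in yours.
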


\begin{proof}
\textnormal{By construction, the first term in \eqref{feedback motion plan} cancels the effect of the attractive vector field $\mathbf F_g$ where at least one of the bump functions $\sigma_i=0$, i.e., in the corresponding region $\mathcal Z_i$ around obstacle $\mathcal O_i$. At the same time the second term shapes the corresponding vector field $\mathbf F_{oi}$ in $\mathcal Z_i$. Thus, the attractive vector field $\mathbf F_g$ is activated through \eqref{feedback motion plan} only when $\beta_i<\beta_{\mathcal Zi}$ $\forall i\in\{1,\dots,N\}$, i.e., outside the regions $\mathcal Z_i$. Furthermore, setting the inter-obstacle distance $d_{ij}\geq \varrho_{\mathcal Zi}+\varrho_{\mathcal Zj}$ implies that the repulsive flows around obstacles do not overlap, and therefore are both safe and almost globally convergent to the goal, as proved in Lemma \ref{1obstacle}. This completes the proof.}
\end{proof}

\begin{Remark}
The condition \eqref{clearance} reads that the minimum distance among the boundaries of the obstacles should be at least $2(\varrho+\varrho_{\varepsilon})$. This clearance is not conservative or restrictive in practice, since the parameter $\varrho_\varepsilon$ can be chosen arbitrarily close to zero, or even equal to zero, in case the robot is allowed to touch the obstacle. 
\end{Remark}

\subsection{Control design and simulation results}
Having \eqref{feedback motion plan} at hand, the control design for the unicycle \eqref{unicycle} is now straightforward. We use the control law:
\begin{subequations}
\label{control law}
\begin{align}
u&=k_u\tanh\left(\|\bm r-\bm r_g\|^2\right),\label{linear u}\\
\omega&=-k_\omega(\theta-\varphi)+\dot \varphi,
\end{align}
\end{subequations}
where $\varphi\triangleq\arctan(\frac{\F^\star_y}{\F^\star_x})$ is the orientation of the vector field $\mathbf F^\star$ at a point $(x,y)$, with its time derivative reading:
\begin{align*}
\dot \varphi \overset{\eqref{unicycle}}{=}\left(\left(\begin{matrix}\frac{\partial\F^\star_y}{\partial x}\;c\theta+\frac{\partial\F^\star_y}{\partial y}\;s\theta\end{matrix}\right)\F^\star_x-\left(\begin{matrix}\frac{\partial\F^\star_x}{\partial x}\;c\theta+\frac{\partial\F^\star_x}{\partial y}\;s\theta\end{matrix}\right)\F^\star_y\right)u,
\end{align*}
with the linear velocity $u$ given by \eqref{linear u}, see in \cite{Panagou_Technical_Report_Vector_Fields}, and $k_\omega>0$, 
$k_u>0$. Then, the orientation $\theta$ of the unicycle is \ac{ges} to the safe orientation $\varphi$, and 
the robot flows along the integral curves of $\mathbf F^\star$ until converging to $\bm r_g$.

To demonstrate the efficacy of the proposed navigation and control design we consider the motion of a robot in an environment with $N=10$ static obstacles (Fig. \ref{fig:path1}), where the goal position is $\bm r_g=\left[-0.1 \;\; 0.08\right]^T$. The radii of the obstacles are set equal to $\varrho_{oi}=0.03$. The blending zone $\mathcal D_i$ around each obstacle $\mathcal O_i$ is illustrated between the boundary surfaces $S_i$ (black line) and $T_i$ (red line), respectively. The resulting collision-free path under the control law \eqref{control law}, with the control gains picked equal to $k_u=0.075$, $k_r=2.5$, are depicted in blue color.
\begin{figure}
\centering
\includegraphics[width=0.95\columnwidth,clip]{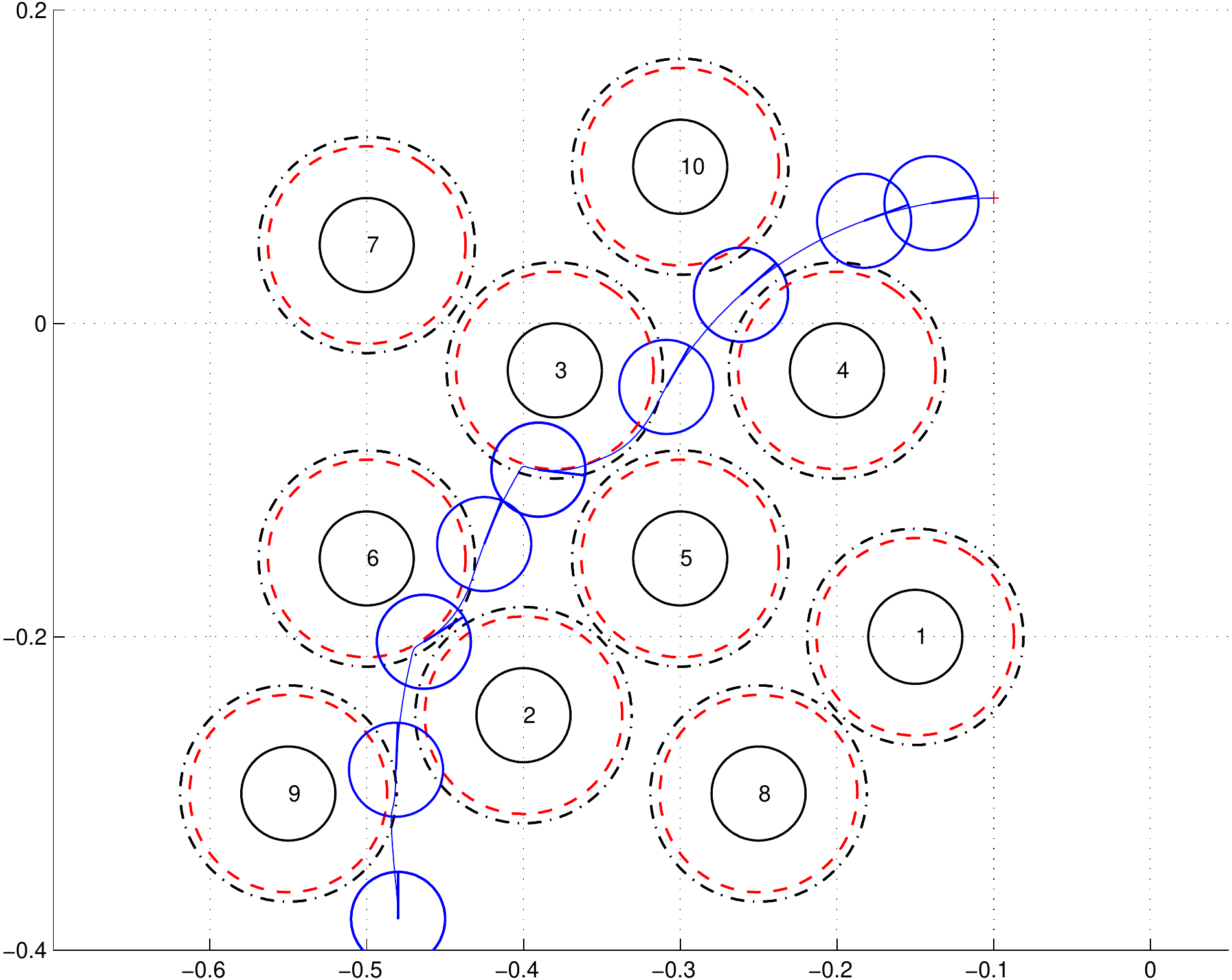}
\caption{The path of a unicycle in a obstacle environment.}
\label{fig:path1}
\end{figure}
\vspace{1mm}
\begin{Remark}
The integral curves of $\mathbf F_{oi}$ in the region $\mathcal A_i$ around an obstacle $\mathcal O_i$ forces the robot to perform a sharp maneuver in order to follow the tangential direction and avoid collision. This in practice is plausible for unicycle-type vehicles (e.g. differentially driven mobile robots), yet it may not be desirable for input-constrained vehicles, such as car-like vehicles and aircraft. Our current work focuses in encoding curvature constraints via \eqref{2-d dipolar}.
\end{Remark}

\section{Extension to dynamic environments}\label{Simulations}
Consider $N$ agents $i\in\{1,2,\dots,N\}$ of unicycle kinematics which are assigned with the task to converge to goal configurations $\bm q_{gi}$ while avoiding collisions. 

Each agent $i$ has a circular communication/sensing region $\mathcal C_i$ of radius $R_c$ centered at $\bm r_i=\begin{bmatrix}x_i&y_i\end{bmatrix}^T$, denoted as: $$\mathcal C_i : \{\bm r\in\R^2 \; | \; \|\bm r_i - \bm r\|\leq R_c\},$$
and can reliably exchange information with any agent $j\neq i$ which lies within its communication region $\mathcal C_i$. In other words, we say that a pair of agents $(i,j)$ is connected, or equivalently, that agent $j$ is neighbor to agent $i$, as long as the inter-agent distance $d_{ij}=\|\bm r_i - \bm r_j\|\leq R_c$. Denote the set of neighbors $j\neq i$ of agent $i$ with $\mathcal N_i$. 

Agents $j\neq i$ serve as dynamic (moving) obstacles to agent $i$. Navigating safely to an assigned goal $\bm q_{gi}$ is then reduced into finding a feedback motion plan $\mathbf F^\star_i$ such that its integral curves:
\begin{inparaenum}
\item[(i)] point into the interior of the collision-free space $\mathcal F_i$ on the boundaries of the agents $j\neq i$, and
\item[(ii)] converge to the goal $\bm q_{gi}$.
\end{inparaenum}
Towards this end, we would like to employ a vector field $\mathbf F^\star_i$ for each agent $i$ as:
\begin{align}
\label{feedback motion plan dynamic}
\mathbf F^\star_i = \prod_{j\in\mathcal N_i} \sigma_{ij} \mathbf F_{gi} + \sum_{j\in\mathcal N_i} (1-\sigma_{ij}) \mathbf F^i_{oj},
\end{align}
where the attractive term $\mathbf F_{gi}$ is taken out of \eqref{normalized attractive}, the bump function $\sigma_{ij}$ is defined later on, and the repulsive term $\mathbf F^i_{oj}$ around each each agent $j\neq i$ \emph{is replaced with a normalized repelling node},\footnote{The tangential repulsive vector field \eqref{repulsive full} defined for static obstacles is not a suitable choice for the dynamic case; the reason is that the repulsive integral curves of the vector field \eqref{feedback motion plan} of agent $i$ around agent $j$ are rendered an invariant set under the proposed velocity coordination protocol, forcing thus the trajectories $\bm r_i(t)$, $\bm r_j(t)$ of a pair of agents $i$, $j$ converge to undesired locations away from the goal locations $\bm r_{gi}$, $\bm r_{gj}$, see also the analysis in \cite{Panagou_ICRA15}.} given out of:
\begin{subequations}
\label{normalized repelling node}
\begin{align}
\F^i_{xoj}= \frac{x_i-x_j}{\sqrt{(x_i-x_j)^2+(y_i-y_j)^2}},\\
\F^i_{yoj}= \frac{y_i-y_j}{\sqrt{(x_i-x_j)^2+(y_i-y_j)^2}}.
\end{align}
\end{subequations}
In order to utilize the (almost global) convergence and safety guarantees applying to the static case, we need to ensure that the repulsive flows around agents do not overlap, for any pair of agents $(i,j)$. Recall from the static case that this condition equivalently reads as that the minimum distance $d_m$ between any pair of (moving, in the dynamic case) agents $(i,j)$ is $d_m=2(2\varrho+\varrho_\epsilon)$, or equivalently, that the minimum clearance between any pair of agents is $(2\varrho+\varrho_\epsilon)$, where $\varrho_\epsilon>0$ arbitrarily small and $\varrho$ is the radius of the agents.

In that respect, the bump function $\sigma_{ij}$ in \eqref{feedback motion plan dynamic} is defined as:
\begin{align}
\label{bump function dynamic}
\sigma_{ij} = \left\{
              \begin{array}{ll}
                1, & \; \hbox{for }\;d_m \leq d_{ij}\leq d_r; \\
                a\;{d_{ij}}^3 + b\;{d_{ij}}^2 + c\; d_{ij} + d, & \; \hbox{for }\;d_r< d_{ij}< d_c; \\
                0, & \; \hbox{for }\;d_{ij}\geq d_c;
              \end{array}
            \right.
\end{align}
where the coefficients $a, b, c, d$ have been computed as:
\begin{align*}
a &= -\frac{2}{(d_r-d_c)^3}, & b &= \frac{3(d_r+d_c)}{(d_r-d_c)^3},\\
c &= -\frac{6\;d_r d_c}{(d_r-d_c)^3}, & d &= \frac{{d_c}^2(3d_c-d_r)}{(d_r-d_c)^3},
\end{align*}
so that \eqref{bump function} is a $\mathcal C^2$ function, $d_m\geq 2(2\varrho+\varrho_\epsilon)$, $d_m<d_r<d_c$.
\begin{Remark}
The communication/sensing range of each agent $i$ should be $R_c\geq d_c$.
\end{Remark}

\subsection{Control design}
Each agent $i$ moves under the control law:
\begin{subequations}
\label{control law agent i}
\begin{align}
u_i &= \left\{
         \begin{array}{rc}
           \max\left\{0,\min\limits_{j\in \mathcal N_i | J_j<0} u_{i|j}\right\}, & \hbox{$d_m\leq d_{ij}\leq R_c$,}\\
            u_{ic}, & \hbox{$R_c<d_{ij}$;} \\
         \end{array}
       \right.,\label{ui}\\
\omega_i &= -k_{\omega i}\left(\theta_i-\varphi_i\right)+\dot \varphi_i,\label{wi}
\end{align}
\end{subequations}
where: $\varphi_i$ is the orientation of the vector field $\mathbf F^\star_i$ at a point $(x,y)$, the vector field $\mathbf F^\star_i$ is given by \eqref{feedback motion plan dynamic}, $u_{i|j}$ is the safe velocity of agent $i$ \ac{wrt} an agent $j$ lying in the communication region of $\mathcal C_i$ of agent $i$, given as:
\begin{align}
\label{safe velocity}
u_{i|j}&=u_{ic}\;\frac{d_{ij}-d_m}{R_c-d_m}+\varepsilon_i \; u_{is|j} \; \frac{R_c-d_{ij}}{R_c-d_m},
\end{align}
with the terms in \eqref{safe velocity} defined as:
\begin{align*}
u_{ic}&=k_{ui} \tanh(\|\bm r_i - \bm r_{gi}\|),\quad u_{is|j}= u_j \;\frac{{\bm r_{ji}}^T\bm \eta_j}{{\bm r_{ji}}^T\bm \eta_i},\\
\bm \eta_i &=\left[\begin{matrix}\cos\varphi_i\\\sin\varphi_i\end{matrix}\right],\quad J_j={\bm r_{ji}}^T\bm \eta_i, \quad \bm r_{ji}=\bm r_i-\bm r_j,
\end{align*}
and $\varepsilon_i>1$, $k_{ui},k_{\omega i}>0$.

\begin{Multi-Robot}\label{Multi-Robot}
\textnormal{Consider $N$ agents $i\in\{1,\dots,N\}$ assigned to converge to goal configurations $\bm q_{gi}$. Then, under the control law \eqref{control law agent i} each agent safely converges to its goal configuration almost globally, except for a set of initial conditions of measure zero.}

\begin{proof}\textnormal{
The closed loop trajectories of each agent $i$ are forced to flow along the vector field \eqref{feedback motion plan dynamic}. If $d_{ij}(t)>R_c$, $\forall t\geq0$ and $\forall j\in\{1,\dots,N\}$, then $\sigma_{ij}(t)=1$, implying that agent $i$ flows safely along \eqref{normalized attractive} and converges to $\bm q_{gi}$.
\\
Let us now assume that at some time $t\geq 0$ the distance $d_{ij}(t)$ between a pair of agents $(i,j)$ is $d_{ij}(t)\leq R_c$. By definition agent $i$ lies in the sensing/communication region of agent $j$ and vice versa, which implies that they exchange information on their current positions $\bm r_i(t)$, $\bm r_j(t)$ and velocities $\bm \nu_i(t)$, $\bm \nu_j(t)$. Consider the blending region $\mathcal D_i:\{\bm r_j\in\R^2\;|\; d_r < \|\bm r_i-\bm r_j\| < d_c\}$ and the surfaces:
\begin{align*}
S_{i}(t) &: \{\bm r_i(t), \bm r_j(t)\in \R^2  \; | \; \|\bm r_i(t)-\bm r_j(t)\|-d_c=0\},\\
T_i(t) &: \{\bm r_i(t), \bm r_j(t)\in \R^2  \; | \; \|\bm r_i(t)-\bm r_j(t)\|-d_r=0\}.
\end{align*}
\begin{nocollisions}
Agent $i$ avoids collision with any of its neighbor agents $j\in\mathcal N_i$.
\end{nocollisions}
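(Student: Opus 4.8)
The plan is to establish the lemma as a \emph{forward-invariance} (barrier) property: I will show that the set $\{\bm r_i,\bm r_j \;|\; d_{ij}\ge d_m\}$ is invariant under the closed-loop dynamics, so that any pair started at $d_{ij}(0)\ge d_m$ satisfies $d_{ij}(t)\ge d_m$ for all $t$, which is exactly absence of collision (recall $d_m=2(2\varrho+\varrho_\epsilon)$). Because the non-overlap requirement $d_m\ge 2(2\varrho+\varrho_\epsilon)$ guarantees that the repulsive regions $\mathcal Z$ of distinct agents are disjoint, it suffices to analyze each neighbor pair $(i,j)$ separately; the minimization over $j\in\mathcal N_i$ with $J_j<0$ in the velocity law then enforces all the pairwise constraints at once. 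By a Nagumo-type argument, invariance reduces to checking the sign of the distance derivative only on the boundary $d_{ij}=d_m$.

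First I would write the distance dynamics. Taking the agent velocity aligned with the field direction, $\bm\nu_i=u_i\bm\eta_i$ (justified since the angular law renders $\theta_i\to\varphi_i$ \ac{ges}, so $\bm\eta_i=[\cos\varphi_i\;\;\sin\varphi_i]^T$ is the steady velocity heading), I obtain $\tfrac{d}{dt}d_{ij}^2=2\bm r_{ji}^T(\bm\nu_i-\bm\nu_j)=2\big(u_i J_j-u_j\,\bm r_{ji}^T\bm\eta_j\big)$, using $J_j=\bm r_{ji}^T\bm\eta_i$ and $\bm r_{ji}=\bm r_i-\bm r_j$. Thus safety on the boundary is equivalent to the normal relative-velocity inequality $u_i J_j\ge u_j\,\bm r_{ji}^T\bm\eta_j$ whenever $d_{ij}=d_m$.

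Next I would evaluate the controller at contact. At $d_{ij}=d_m$ the goal-tracking term of the safe velocity vanishes and $u_{i|j}=\varepsilon_i\,u_{is|j}$, with $u_{is|j}=u_j\,(\bm r_{ji}^T\bm\eta_j)/J_j$. Only neighbors with $J_j<0$ (those agent $i$ is approaching) enter the minimization, so for the binding neighbor $u_i\le u_{i|j}=\varepsilon_i u_{is|j}$, while the clamp $\max\{0,\cdot\}$ forces $u_i=0$ when the admissible speed is non-positive. Substituting and using the definition of $u_{is|j}$, the required inequality reduces to a sign comparison of $u_i$ against $u_{is|j}$. I would then split into the head-on case ($u_{is|j}\le 0$, where the clamp gives $u_i=0$ and safety is immediate) and the chasing case ($u_{is|j}>0$), where I must invoke \emph{reciprocity}: agent $j$ runs the symmetric law and measures $u_i$, so its own safe-velocity constraint closes the combined relative-velocity estimate.

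The hard part will be precisely this reciprocal sign bookkeeping together with the role of $\varepsilon_i>1$ and the orientation transient. Specifically, I must confirm that $\bm\nu_i=u_i\bm\eta_i$ may be used while $\theta_i\ne\varphi_i$ (or bound the error this mismatch contributes to $\tfrac{d}{dt}d_{ij}^2$), and I must check that the factor $\varepsilon_i$ and the saturation $\max\{0,\cdot\}$ jointly deliver a non-negative normal relative velocity in \emph{every} sign configuration of $u_{is|j}$, including when both agents adjust simultaneously and when several neighbors are active at once. Once the boundary inequality $\tfrac{d}{dt}d_{ij}^2\big|_{d_{ij}=d_m}\ge 0$ is verified across these cases, forward invariance of $\{d_{ij}\ge d_m\}$, and hence collision avoidance with every neighbor $j\in\mathcal N_i$, follows.
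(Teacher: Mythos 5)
Your plan follows essentially the same route as the paper's proof: compute the time derivative of the inter-agent distance from the unicycle kinematics (the paper obtains $\tfrac{d}{dt}d_{ij}=\bigl(u_i\,\bm r_{ji}^T\bm\eta_i-u_j\,\bm r_{ji}^T\bm\eta_j\bigr)/d_{ij}$, the unsquared version of your identity), evaluate the control law \eqref{control law agent i} on the boundary $d_{ij}=d_m$, and conclude forward invariance of $\{d_{ij}\ge d_m\}$ and hence collision avoidance since $d_m>2\varrho$. The ``hard parts'' you defer --- justifying $\bm\eta_i=[\cos\varphi_i\;\;\sin\varphi_i]^T$ during the orientation transient, and verifying the sign of the boundary derivative across all configurations of $u_{is|j}$, $\varepsilon_i$, and the $\max\{0,\cdot\}$ clamp --- are precisely the steps the paper also leaves implicit, asserting only that the control law renders the derivative positive at $d_{ij}=d_m$.
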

\begin{proof}
Collision-free motion is realized as ensuring that $d_{ij}(t)\geq 2\varrho$, $\forall t\geq 0$, for any pair $(i,j)$. Let us consider the time derivative of inter-agent distance function, which after some calculations reads:
\begin{align}
\label{dijdt}
\frac{d}{dt}\;d_{ij}&=\frac{(x_i-x_j)(\dot x_i-\dot x_j)}{d_{ij}}+\frac{(y_i-y_j)(\dot y_i-\dot y_j)}{d_{ij}}\nonumber\\
&\overset{\eqref{unicycle}}{=}\frac{u_i \; {\bm r_{ji}}^T\bm \eta_i - u_j \; {\bm r_{ji}}^T\bm \eta_j}{d_{ij}}.
\end{align} 
The control law \eqref{control law agent i} renders the value of the time derivative \eqref{dijdt} positive when the value of the distance function is $d_{ij}=d_m$, implying thus that the inter-agent distance is forced to increase. Since $d_m>2\varrho$, this further implies that collisions are avoided.
\end{proof}
In order to draw conclusions about the convergence of the agents' trajectories to their goal configurations we need to examine the behavior of the integral curves around the switching surfaces $S_i(t)$, $T_i(t)$. With the vector fields $\mathbf F_i^\star$, $\mathbf F_j^\star$ well-defined everywhere in the corresponding blending regions, and the linear velocities $u_i$, $u_j$ vanishing only at the goal locations $\bm r_{gi}$, $\bm r_{gj}$, we are interested in identifying conditions under which the system trajectories $\bm r_i(t)$, $\bm r_j(t)$ are forced to get stuck on $S_{i}(t)$, or on $T_i(t)$, for infinite amount of time. This can be seen as identifying sufficient conditions of the appearance of (chattering) Zeno behavior, or Zeno points \cite{Ames_ACC05}. A sufficient condition on the appearance of Zeno points is given in \cite{Ceragioli_NonlinearAnalysis2006}, Theorem 2. Based on this result, we study under which conditions the system (i.e., agents') trajectories converge to a Zeno point. Consider the case with $N=2$ agents. Denote the dynamics of the $k$-th agent as $\bm{\dot q}_k=\bm f_k(\bm q_k)$, $k\in\{i,j\}$, $\bm q=\left[{\bm q_i}^T \;\; {\bm q_j}^T\right]^T$, $\bm r=\left[{\bm r_i}^T \;\; {\bm r_j}^T\right]^T$, and take:
\begin{align}
\nabla S_{i} \bm f(\bm q) = 2 u_i\; {\bm r_{ij}}^T \left[\begin{matrix}\cos\theta_i\\ \sin\theta_i\end{matrix}\right]-2 u_j \; {\bm r_{ij}}^T \left[\begin{matrix}\cos\theta_j\\ \sin\theta_j\end{matrix}\right],
\end{align}
where $\bm r_{ij}=\bm r_i-\bm r_j$. Note that the control law \eqref{wi} renders the orientation $\theta_k$ of the $k$-th agent \ac{ges} to the orientation $\varphi_k$ of the vector field $\mathbf F_k^\star$. Thus, the unit vector $\left[\cos\theta_k\;\;\;\sin\theta_k\right]^T$ coincides with the vector field $\mathbf F_k^\star(\bm r_k)$, evaluated at $\bm r_k\in\R^2$. With this at hand 
and after some algebraic calculations one has:
\begin{align*}
\nabla S_{i}^- &\; \bm f(\bm q)=\underbrace{2u_i\; {\bm r_{ij}}^T \mathbf F_{gi}-2 u_j \; {\bm r_{ij}}^T \mathbf F_{gj}}_A,\\
\nabla S_{i}^+ &\; \bm f(\bm q)= \sigma_{ij}\;\left(2u_i\; {\bm r_{ij}}^T \mathbf F_{gi}-2 u_j\; {\bm r_{ij}}^T \mathbf F_{gj}\right) \\
&+ (1-\sigma_{ij})\;\underbrace{\left(2 u_i\; {\bm r_{ij}}^T \mathbf F^i_{oj} - 2 u_j \; {\bm r_{ij}}^T \mathbf F^j_{oi}\right)}_B\;.
\end{align*}
The set of Zeno points is: $$Z_{i}=\{\bm r \in \R^{2N} \;| \; \nabla S_{i}^- \;\bm f(\bm q)=\nabla S_{i}^+ \; \bm f(\bm q)=0\},$$ which reads: 
\begin{align*}
A=\sigma_{ij}A+(1-\sigma_{ij})B&=0\Rightarrow A=B=0\Rightarrow \\
u_i\;{\bm r_{ij}}^T(\mathbf F_{gi}-\mathbf F^i_{oj})&=u_j\;{\bm r_{ij}}^T(\mathbf F_{gj}-\mathbf F^j_{oi}).
\end{align*} 
Not surprisingly, the set $Z_i$ is depended on the current positions $\bm r_i$, $\bm r_j$ and the goal locations $\bm q_{gi}$, $\bm q_{gj}$. The Zeno condition reduces to $(\mathbf F_{gi}+\mathbf F_{gj})=\bm 0$, which corresponds to current positions $\bm r_i(t)$, $\bm r_j(t)$ and goal locations $\bm r_{gi}$, $\bm r_{gj}$ lying on the same line. Then, the set of initial conditions (positions) from which agents' trajectories converge to the set $Z_{i}$ is confined on $\mathbb R$, i.e., on a lower dimensional manifold, and as thus is of measure zero. The same analysis holds along the switching surface $T_{i}(t)$, yielding exactly the same condition as before regarding on the appearance of Zeno points.
\vspace{1mm}
\\
The case of $N>2$ agents can be treated accordingly. Consider an agent $i$ lying at distance $d_{im}\leq d_c$ \ac{wrt} $M\leq (N-1)$ agents $m\neq i$. The vector field $\mathbf F_i^\star$ includes the repulsive effect $\mathbf F^i_{om}$ of all $M$ connected agents. To check whether undesired singularities appear, one needs to consider the norm $\|\mathbf F^i\|$ in the blending region $\mathcal D_i$. The analytical expression is more involved compared to the $N=2$ case.
To make the probability of more than one agents lying in the blending region $\mathcal D_i$ as low as possible, one can define the width $d_c-d_r\rightarrow 0$. 
Define also: $$S_{im}(t) : \{\bm r_i(t), \bm r_m(t)\in \R^2  \; | \; \|\bm r_i(t)-\bm r_m(t)\|-d_c=0\}$$ the $M\leq (N-1)$ switching surfaces of agent $i$ \ac{wrt} its neighbors $m$. The conditions on the appearance of Zeno points around each switching surface read: $$\nabla S_{im}^-\bm f(\bm r_i,\bm r_m)=\nabla S_{im}^+\bm f(\bm r_i,\bm r_m)=0, \forall m\in\{1,\dots,N\}.$$ 
This results in $\frac{N M}{2}$ switching surfaces, since for any pair of agents $(i,m)$ it holds that: $S_{im}=S_{mi}$, and $N M$ Zeno conditions. 
\\
Now, note that the $\frac{N M}{2}$ Zeno conditions $S^-_{im}\bm f(\bm r_i,\bm r_m)=0$ introduce $2\;\frac{N M}{2}=N M$ unknown terms of the form ${\bm r_{im}}^T\mathbf F_{gi}$, ${\bm r_{im}}^T\mathbf F_{gm}$. 
\\
In the same spirit, the $\frac{N M}{2}$ Zeno conditions $S^+_{im}\bm f(\bm r_i,\bm r_m)=0$ additionally introduce $\frac{N M}{2} (M-1) 2 = N M (M-1)$ unknown terms of the form ${\bm r_{im}}^T\mathbf F^i_{ok}$, ${\bm r_{im}}^T\mathbf F^m_{ok}$. 
\\
In total, one has $N M^2$ unknown terms and $N M$ equations. Given that the $N$ goal locations $\bm r_{gi}, \bm r_{gm}, \dots, $ are known, the number of unknown terms reduces to $NM^2-NM=NM(M-1)$. To have as many equations as unknown terms, it should hold that $M-1=1 \; \Rightarrow \; M=2$. This implies that each agent $i\in\{1,\dots,N\}$ is connected with at most $M=2$ agents; note that this is irrespective of the total number of agents $N$. 
Then, the geometric conditions which result in Zeno points are given as the solutions of the resulting linear system; these solutions express $NM$ relations of the form $\bm r_{im}^T\mathbf F^i_{ok}$, $\bm r_{im}^T\mathbf F^m_{ok}$, which dictate the Zeno points, i.e., the Zeno positions among the $N$ agents. Then, the set of initial conditions from which the agents converge to these Zeno positions are confined to a lower dimensional manifold, since they correspond to initial positions confined on a line, and to a specific initial orientation for each agent, and as thus are of measure zero. 
\\
Finally, let us note that the case of $M>2$ neighbors is not of interest for the proposed algorithm, as each agent $i$ makes the avoidance decision \ac{wrt} the worst-case neighbor agent, i.e., \ac{wrt} the agent which is more susceptible to collision. This is realized via considering the safe velocity $u_{i|m}$ \ac{wrt} each neighbor agent $m$ and taking the minimum over safe velocities in the definition of the linear velocity control law \eqref{ui}. The maximum function is defined to ensure that each agent $i$ will never be forced to move with negative linear velocity, i.e., backwards; this is to ensure that there is no possibility of back-to-back colliding agents. 
\\
In summary, the motion of each agent $i$ remains collision-free \ac{wrt} its neighbor agents $j\in\mathcal N_i$ under the control law \eqref{control law agent i}, and each agent $i$ converges to its goal location $\bm q_{gi}$ almost globally, except for a set of initial configurations of measure zero. This completes the proof.}
\end{proof}
\end{Multi-Robot}
\begin{Remark}
Theorem \ref{Multi-Robot} justifies that the set of initial conditions for which the multi-robot system exhibits Zeno trajectories (chattering across a switching surface for infinite amount of time) which result in robots getting stuck away from their goals, is of measure zero. To avoid sliding along a switching surface, which can be seen as ``finite-time chattering", one can employ hysteresis logics \cite{Liberzon}.
\end{Remark}

\subsection{Simulation Results}

We consider $\N=30$ agents which are moving towards their goal locations (depicted with square markers) starting from goal positions (depicted with cross markers) while avoiding collisions, see the resulting paths in Fig. \ref{fig:multi_scenario1, fig:multi_scenario2}. The goal locations are defined sufficiently far apart so that the communication regions do not overlap when agents lie on their goal locations.

\begin{figure*}
{
\centering
\subfigure{\includegraphics[width=0.32\textwidth,clip]{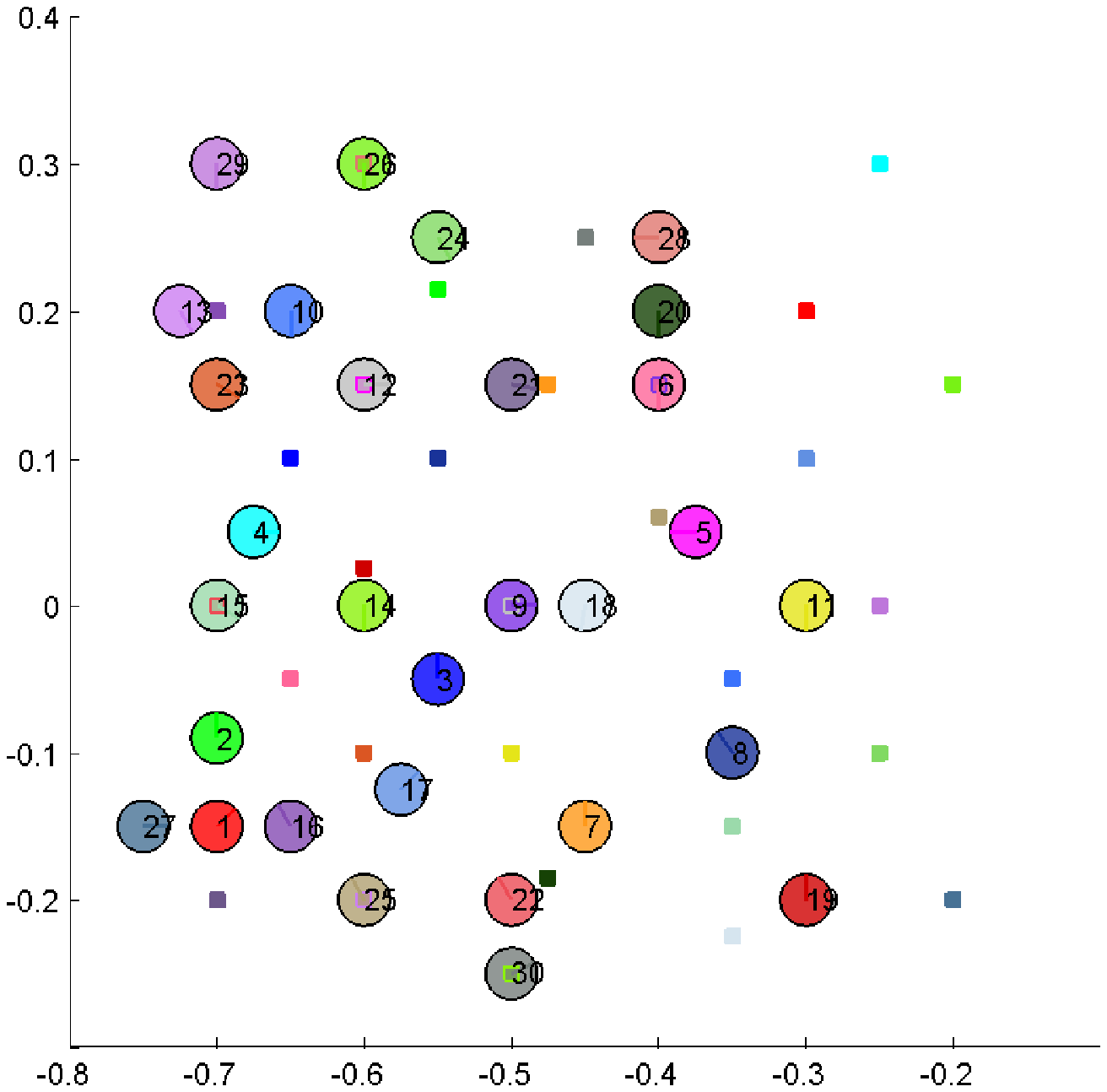}}
\subfigure{\includegraphics[width=0.32\textwidth,clip]{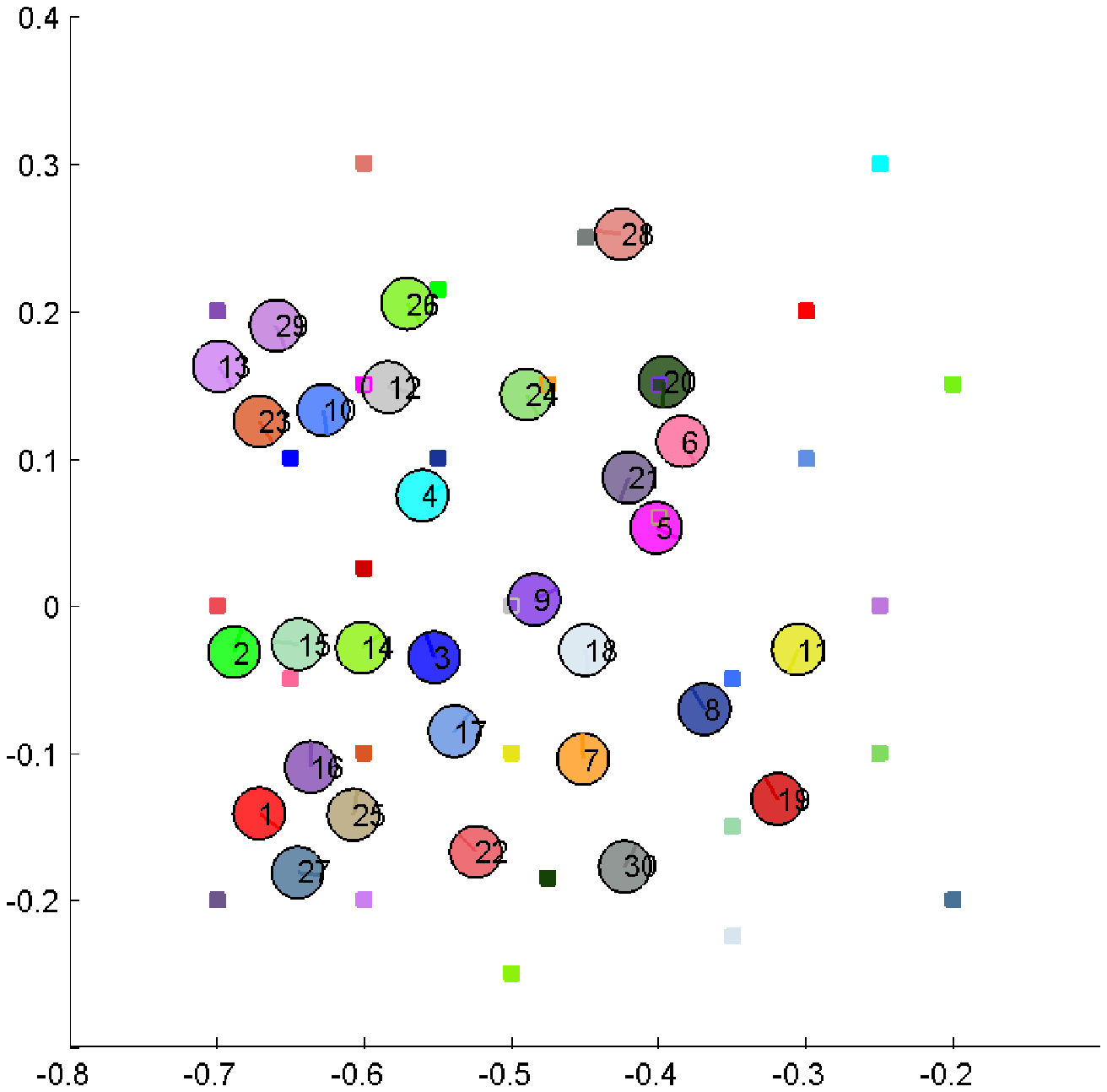}}
\subfigure{\includegraphics[width=0.32\textwidth,clip]{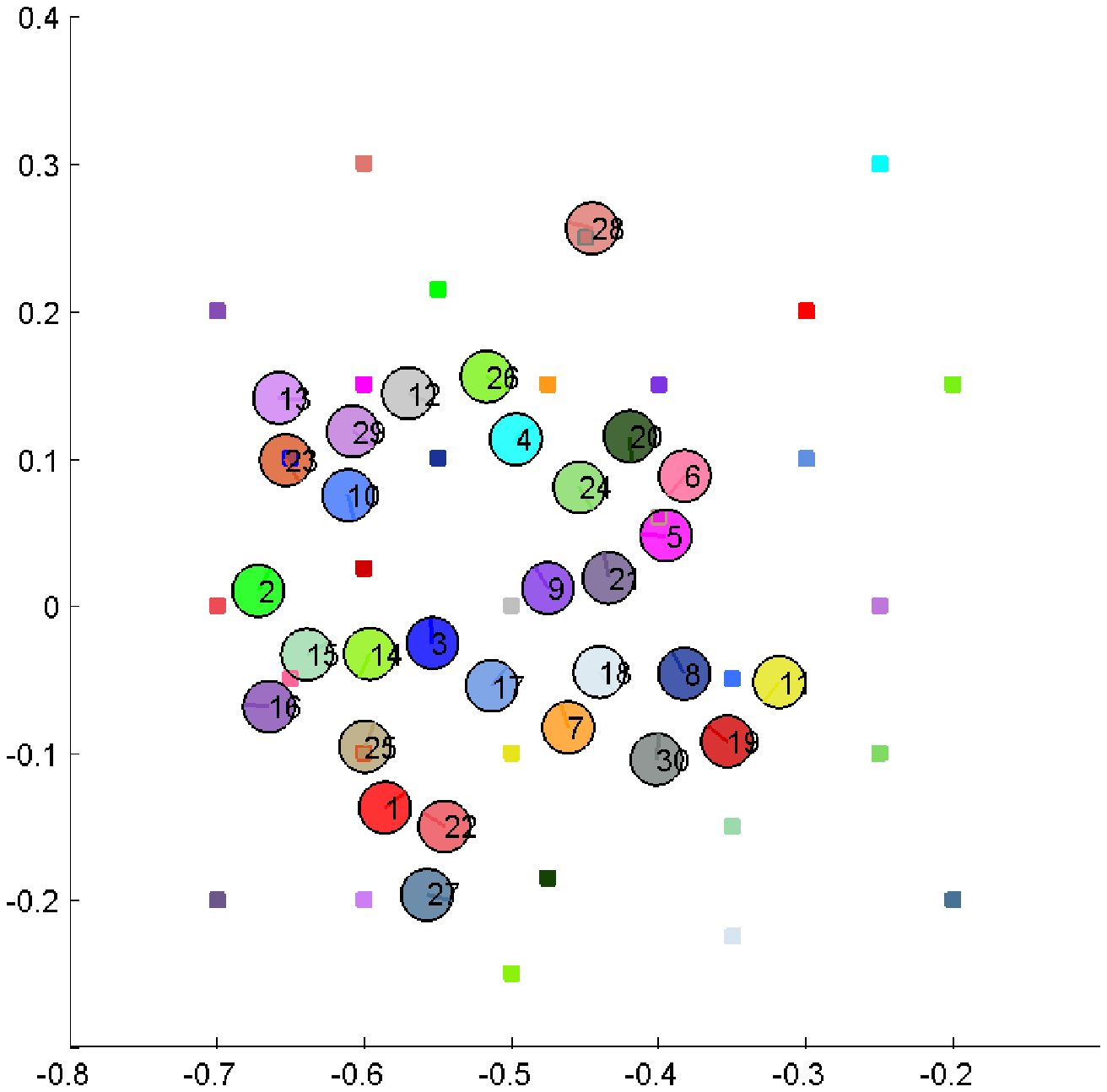}}
\subfigure{\includegraphics[width=0.32\textwidth,clip]{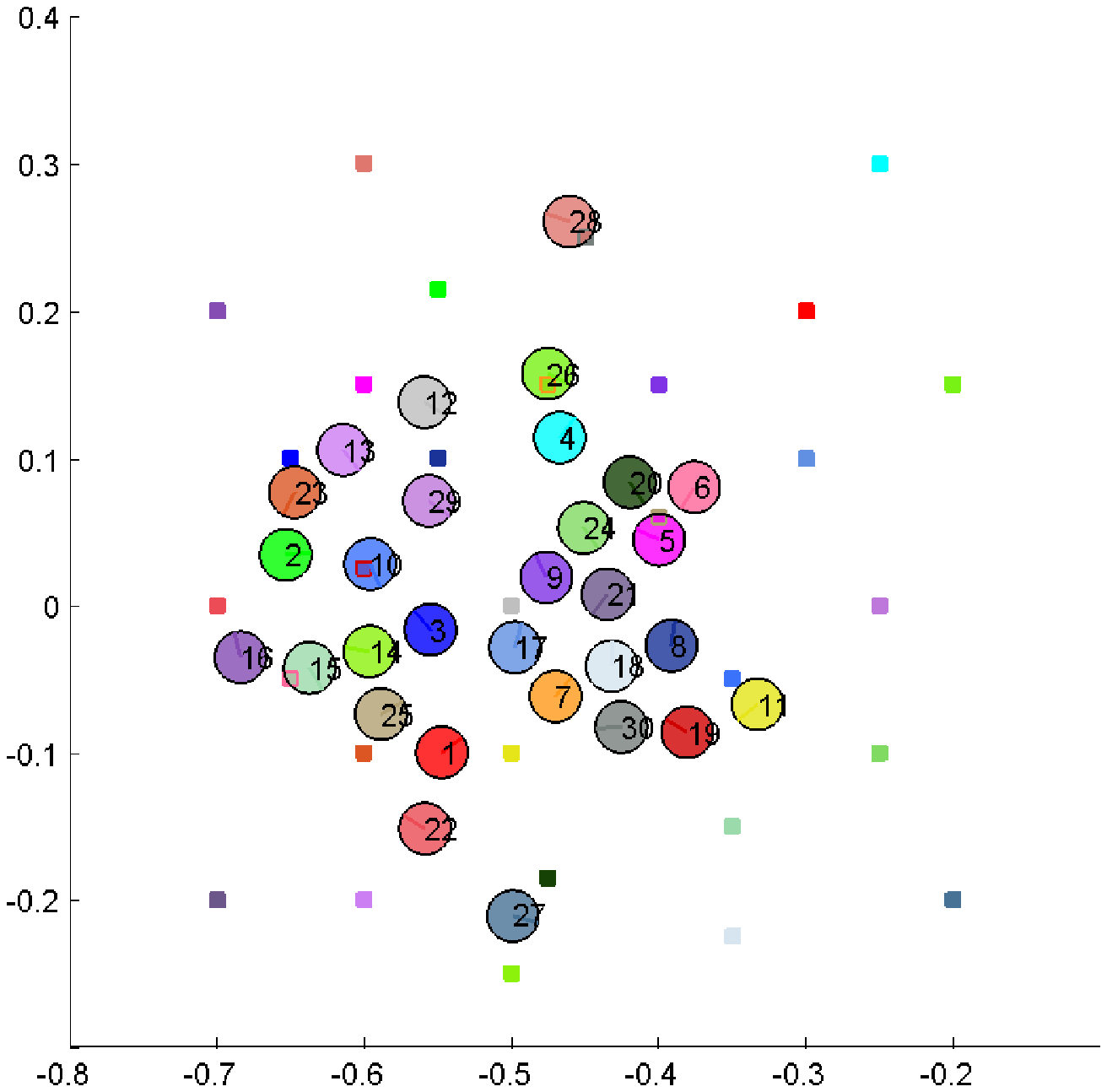}}
\subfigure{\includegraphics[width=0.32\textwidth,clip]{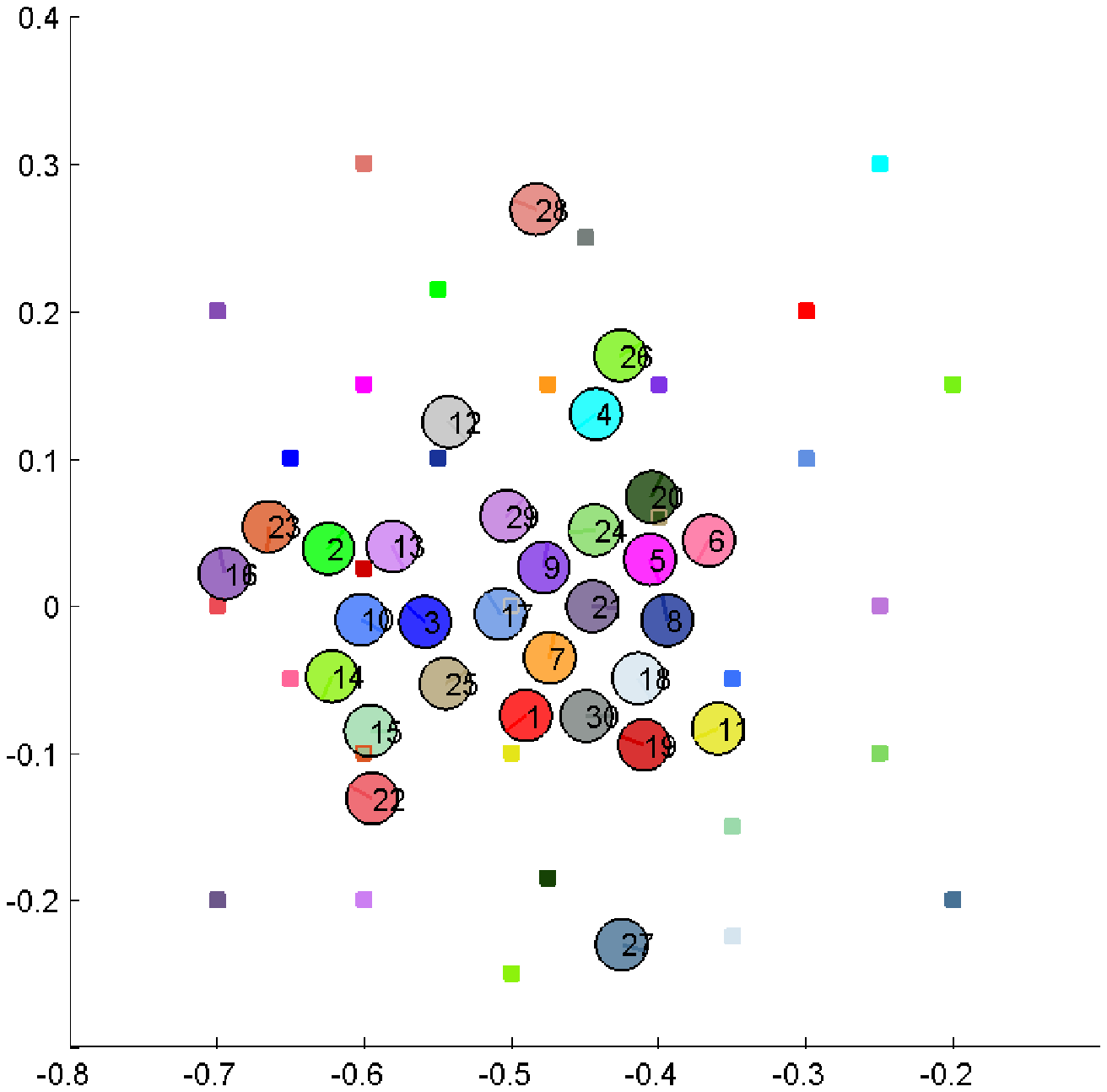}}
\subfigure{\includegraphics[width=0.32\textwidth,clip]{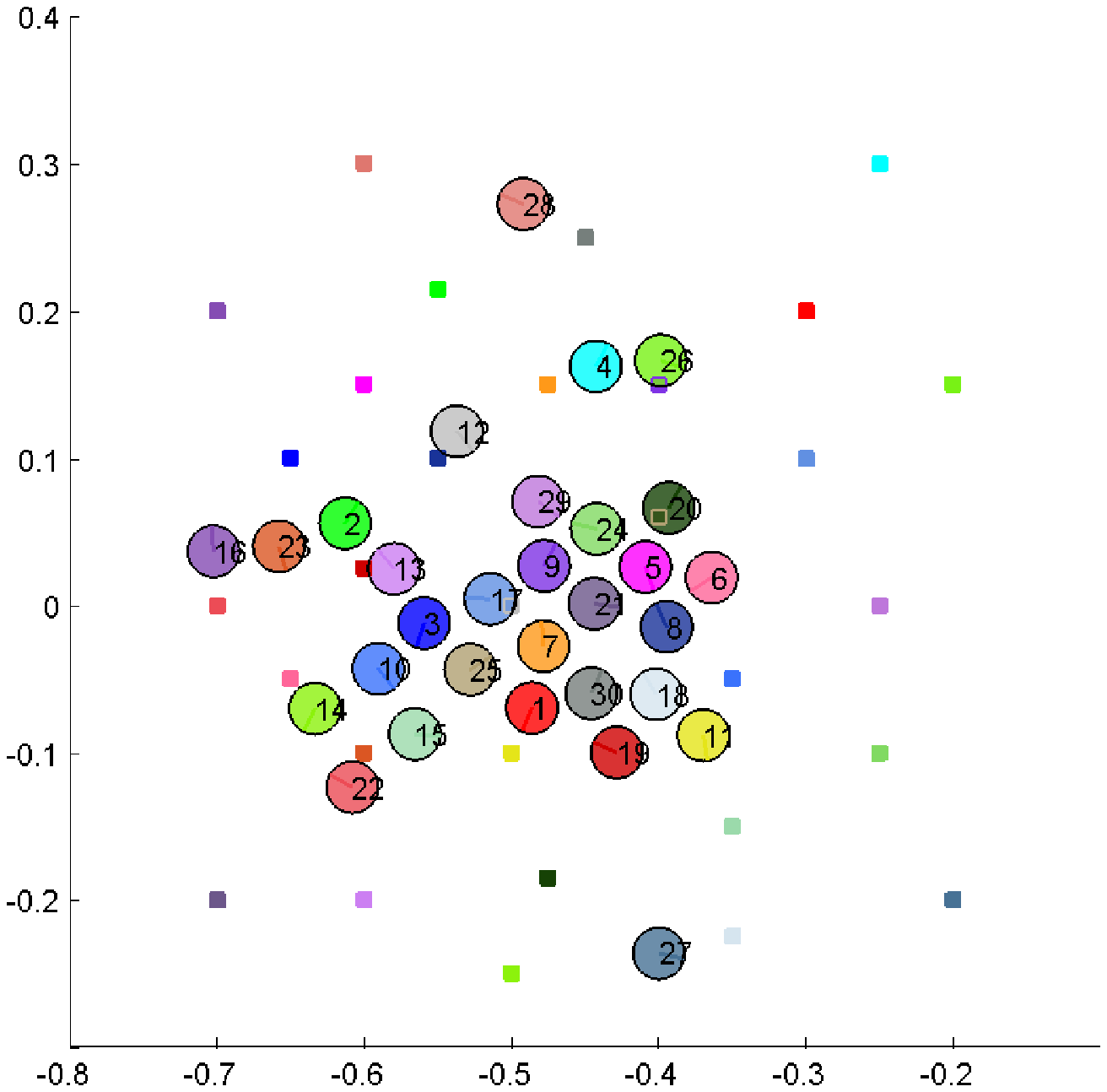}}
\subfigure{\includegraphics[width=0.32\textwidth,clip]{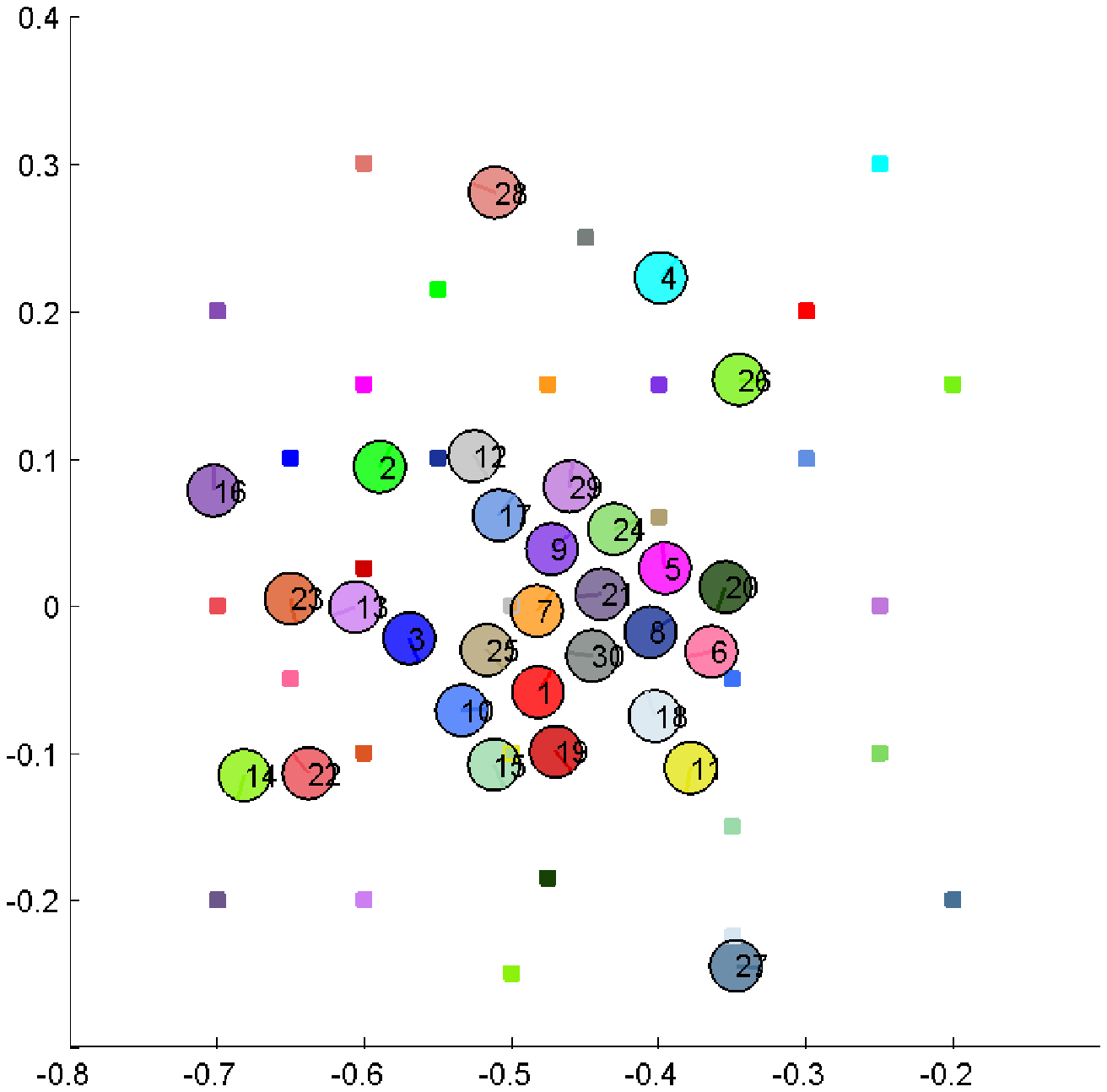}}
\subfigure{\includegraphics[width=0.32\textwidth,clip]{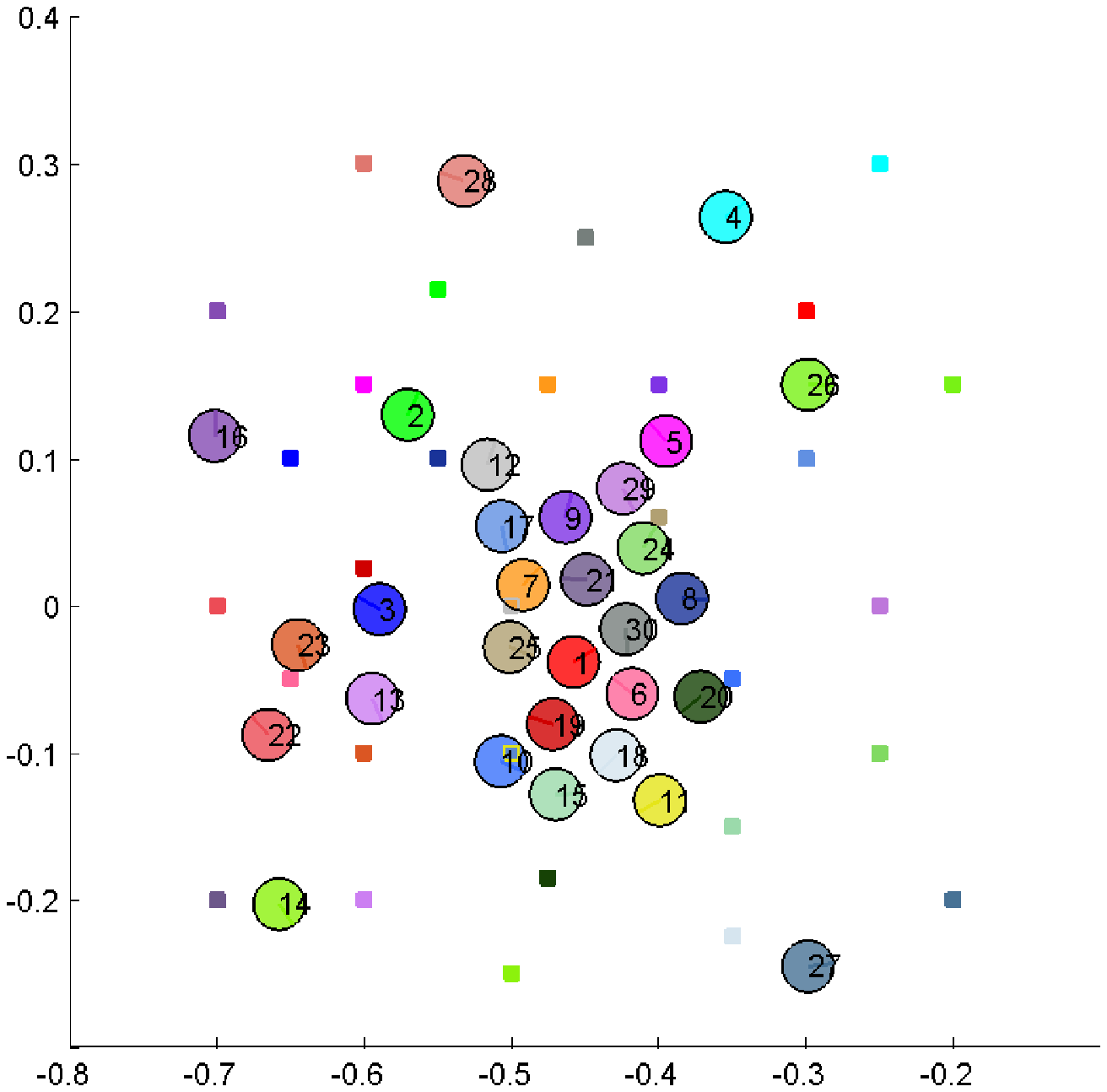}}
\subfigure{\includegraphics[width=0.32\textwidth,clip]{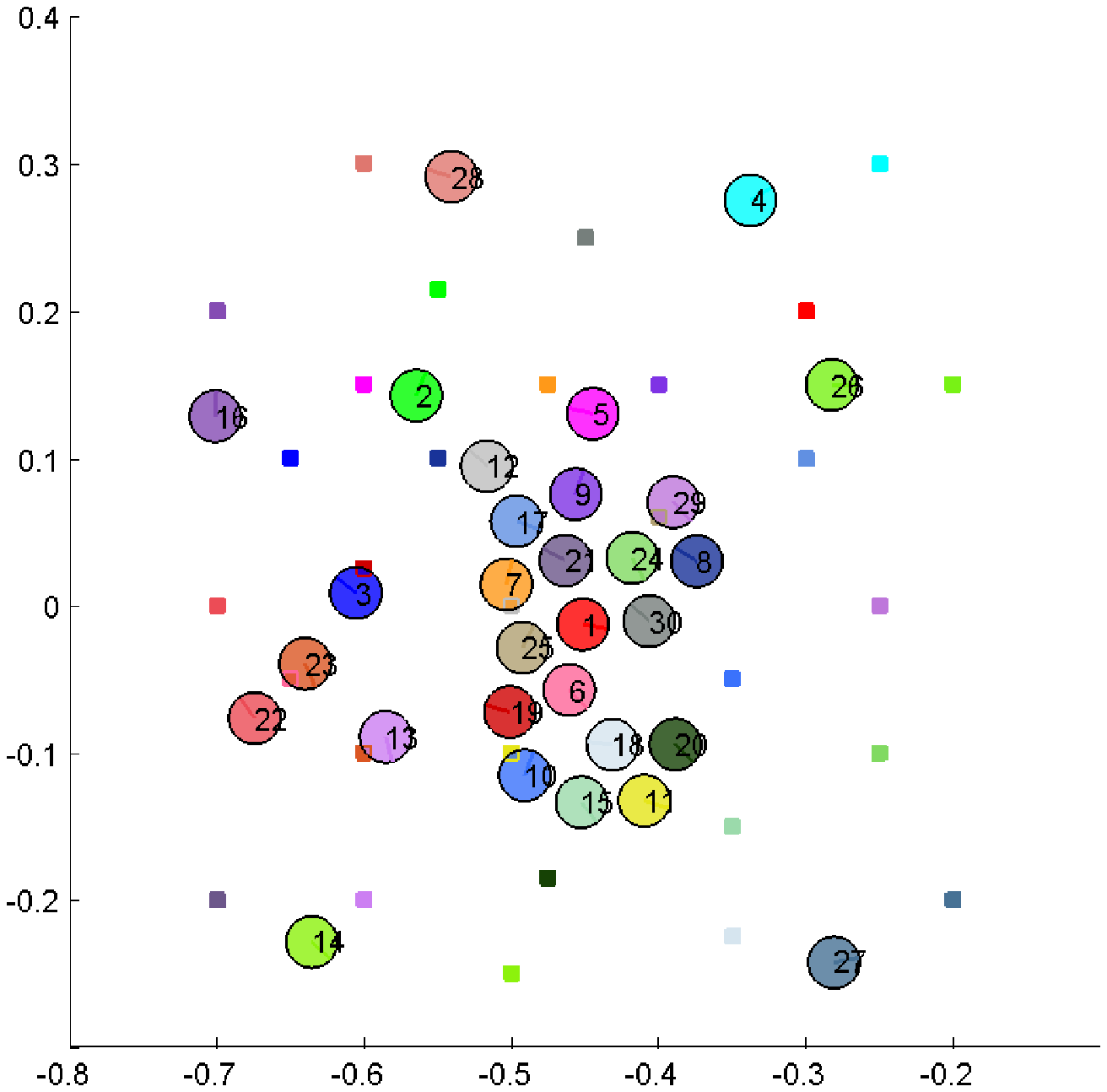}}
\subfigure{\includegraphics[width=0.32\textwidth,clip]{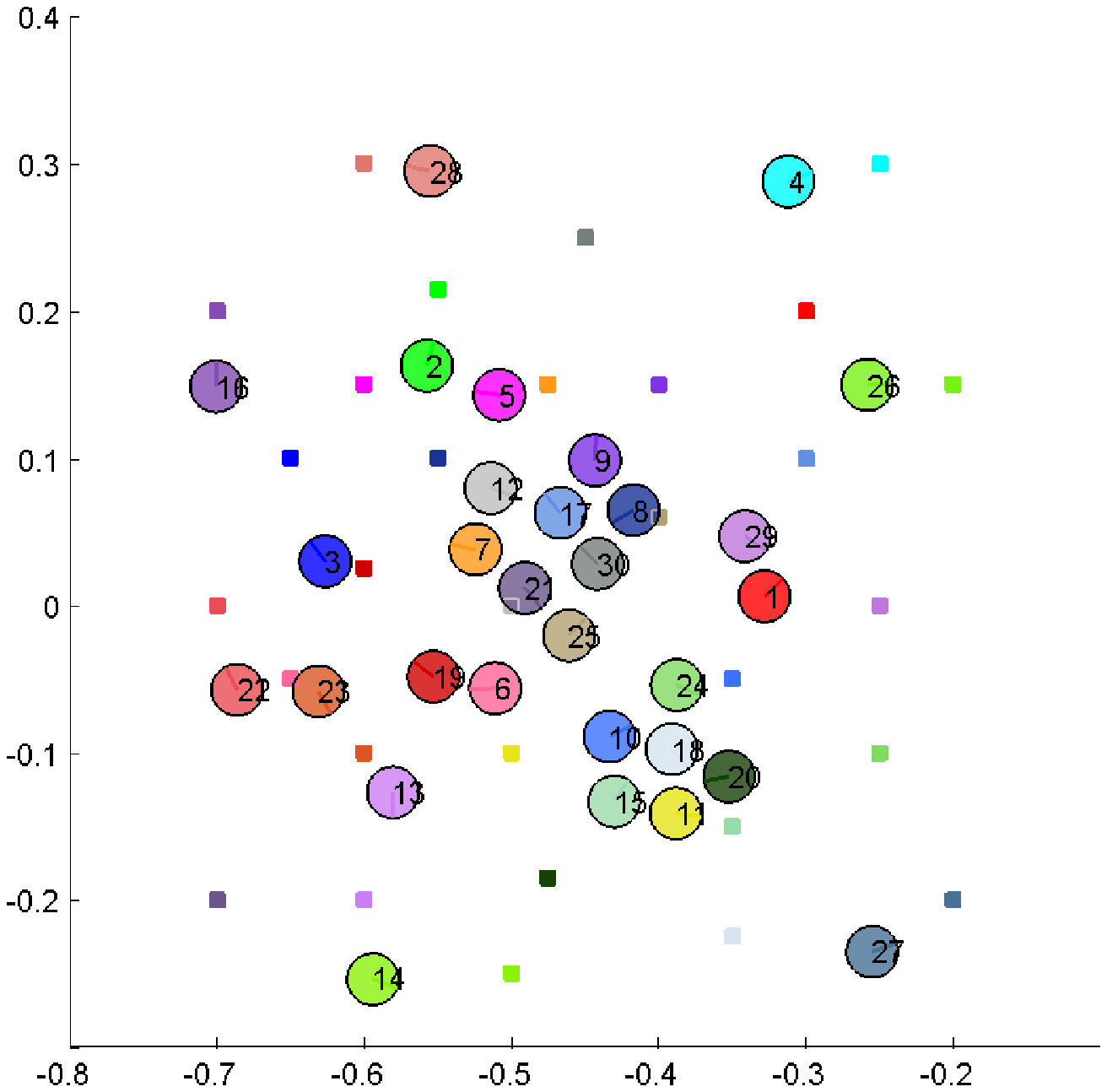}}
\subfigure{\includegraphics[width=0.32\textwidth,clip]{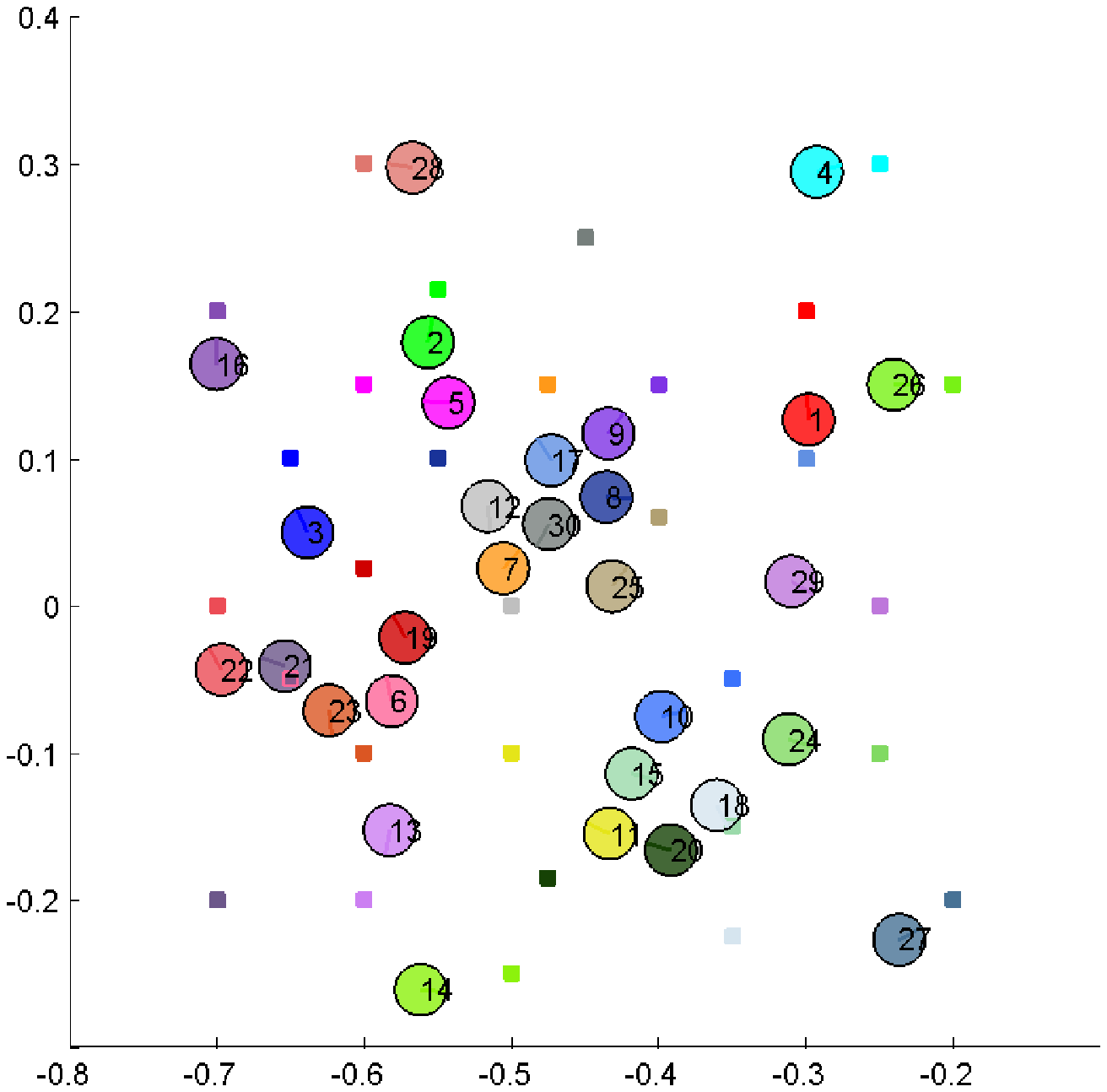}}
\subfigure{\includegraphics[width=0.32\textwidth,clip]{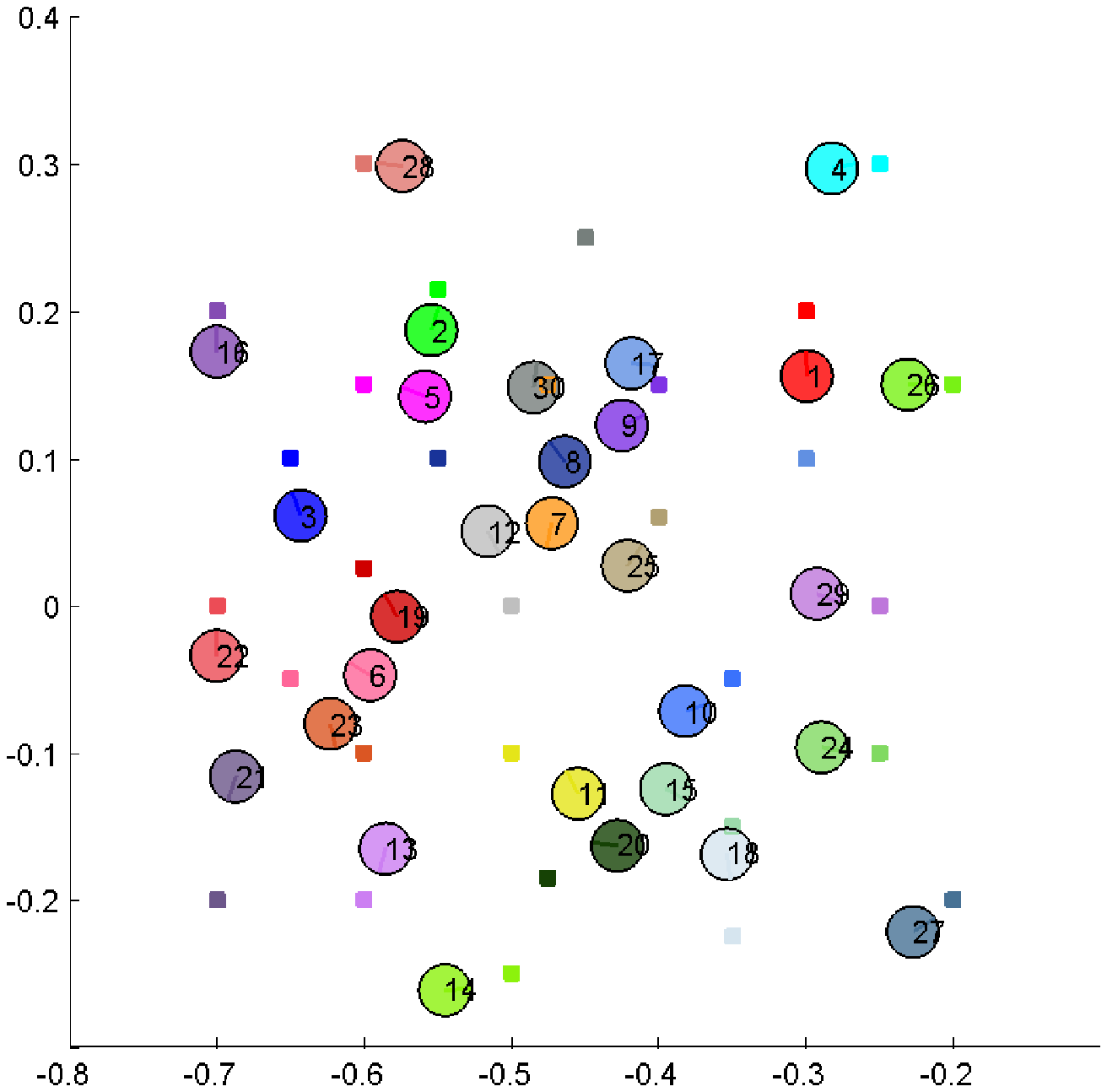}}
}
\caption{Collision-free motion of 30 nonholonomic agents under the proposed control strategy.}
\label{fig:multi_scenario1}
\end{figure*}

\begin{figure*}
{
\centering
\subfigure{\includegraphics[width=0.32\textwidth,clip]{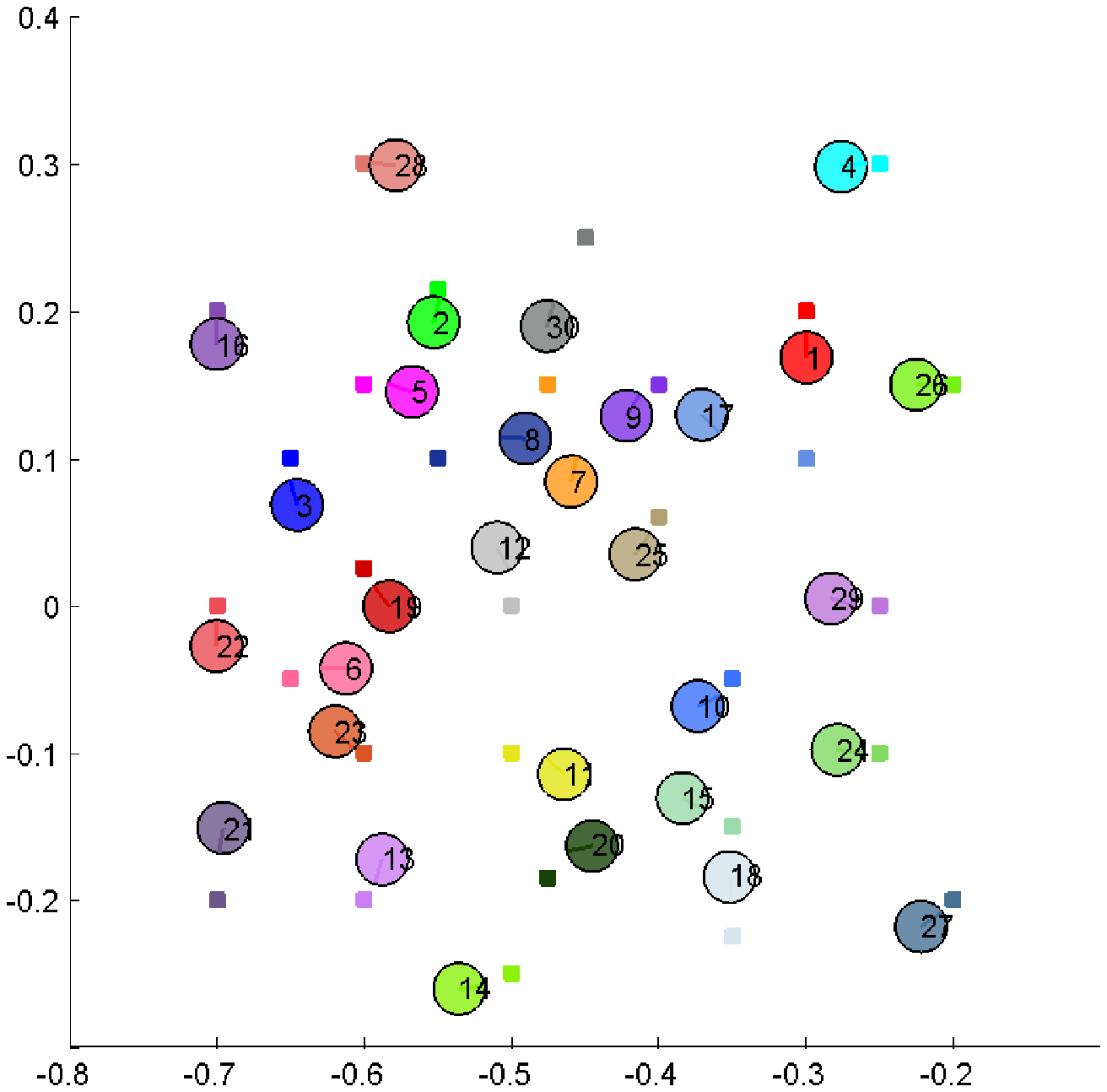}}
\subfigure{\includegraphics[width=0.32\textwidth,clip]{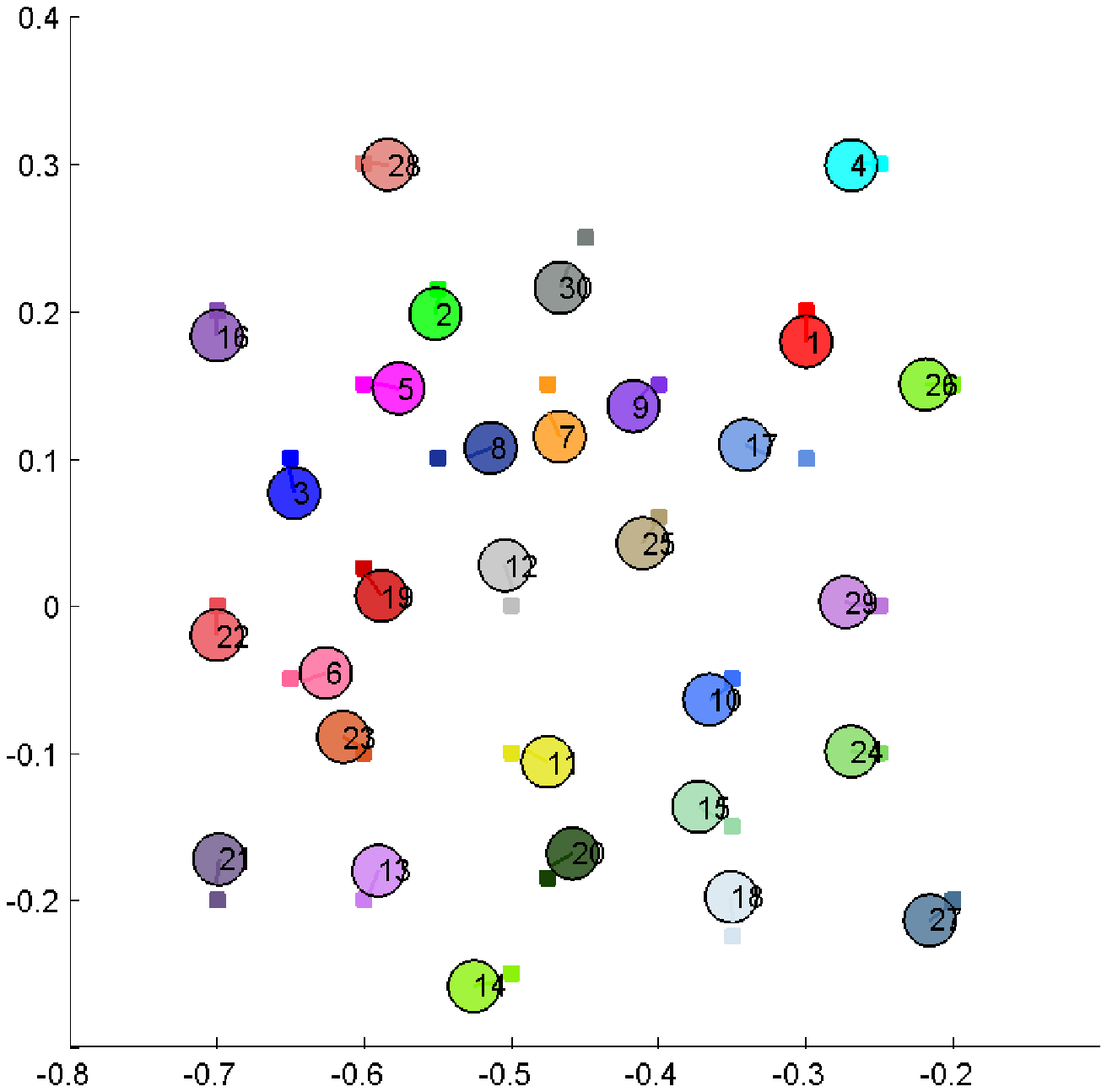}}
\subfigure{\includegraphics[width=0.32\textwidth,clip]{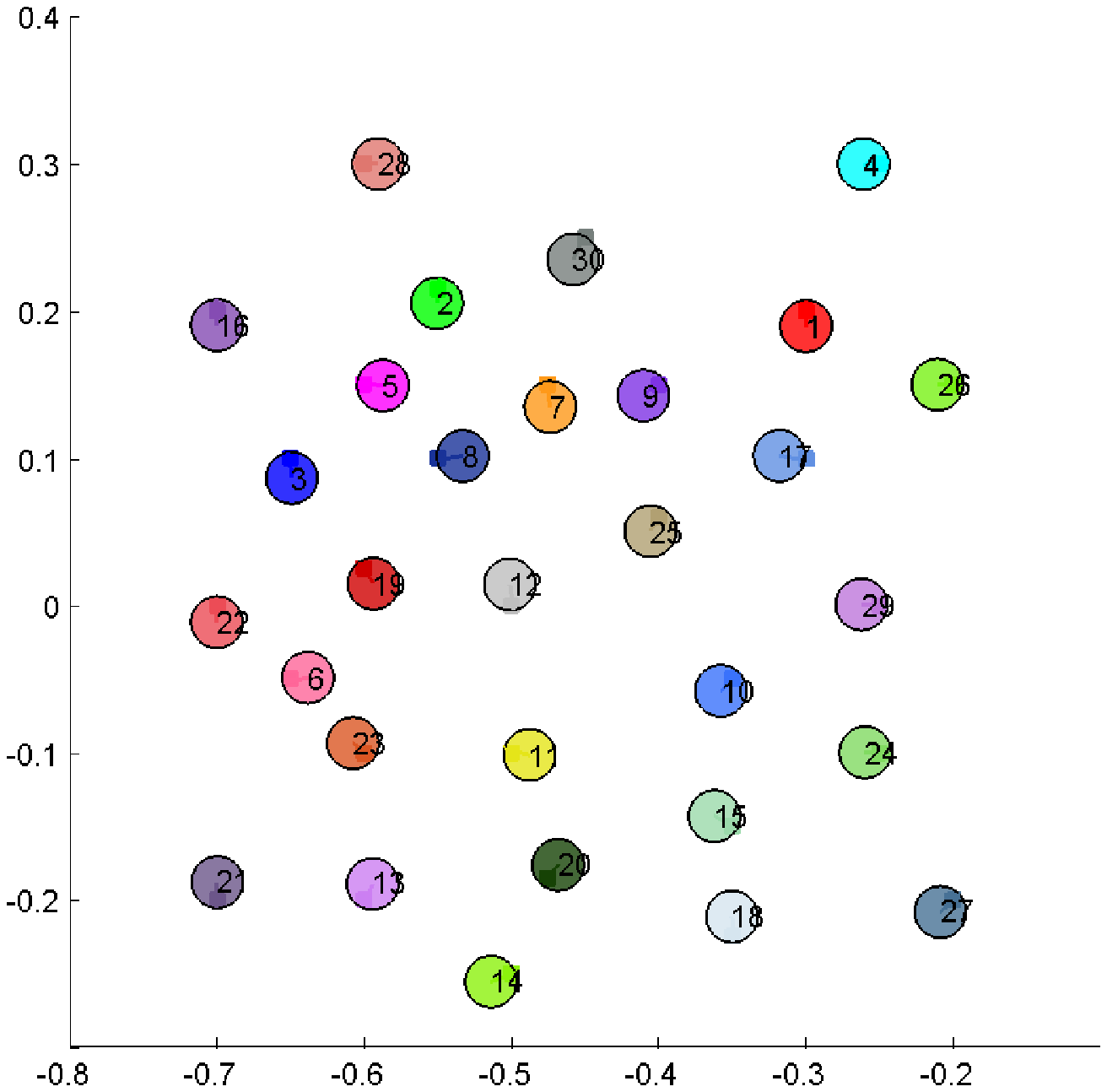}}
}
\caption{Collision-free motion of 30 nonholonomic agents under the proposed control strategy (Continued).}
\label{fig:multi_scenario2}
\end{figure*}

%
%
%

\section{Conclusions}\label{Conclusions}

This paper presented a novel methodology for the motion planning of unicycle robots in environments with obstacles, with extensions to the collision avoidance in multi-agent systems. The method is based on a family of vector fields whose integral curves exhibit attractive or repulsive behavior depending on the value of a parameter. It was shown that attractive-to-the-goal and repulsive-around-obstacles vector fields can be suitably blended in order to yield almost global feedback motion plans in environments with circular obstacles. The case of collision avoidance under local sensing/communication in multi-agent scenarios was also treated. No parameter tuning is needed in order to avoid local minima, as needed in similar methods which are based on scalar (potential) functions. Current work focuses on the definition of vector fields encoding input constraints, such as curvature bounds, which may be more appropriate for aircraft and car-like vehicles. 

\bibliographystyle{ieeetran}
\bibliography{myreferences}

\appendix
Here we present some preliminary ideas on the extension of the method to polygonal environments. 

Consider the pattern of the integral curves for $\lambda<-1$, shown in Fig. \ref{fig:polygonal_flow}. The repulsive nature of the integral curves \ac{wrt} the axis the vector $\bm p$ lies on can be used to define a repulsive flow \ac{wrt} each side of polygonal obstacles, as shown in Fig. \ref{fig:polygonal_obstacle}. The effect of the repulsive flows can be confined around the polygonal obstacle using blending mechanisms as those presented in Section 4. Identifying sufficient minimum clearance around the obstacles which guarantees the almost global convergence of the integral curves to a goal configuration in such a polygonal environment is currently ongoing work, and beyond the scope and the length of the current paper.
\begin{figure}
\centering
\includegraphics[width=0.5\columnwidth,clip]{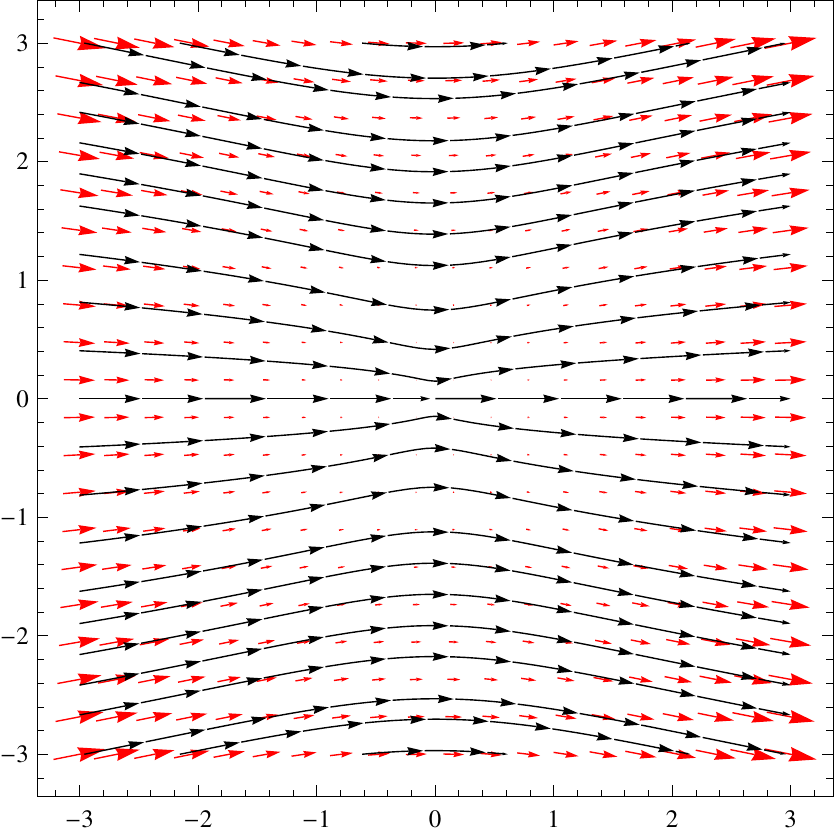}
\caption{The integral curves for $\lambda=-1$, $p_x=-1$, $p_y=0$.}
\label{fig:polygonal_flow}
\end{figure}

\begin{figure}
\centering
\includegraphics[width=0.5\columnwidth,clip]{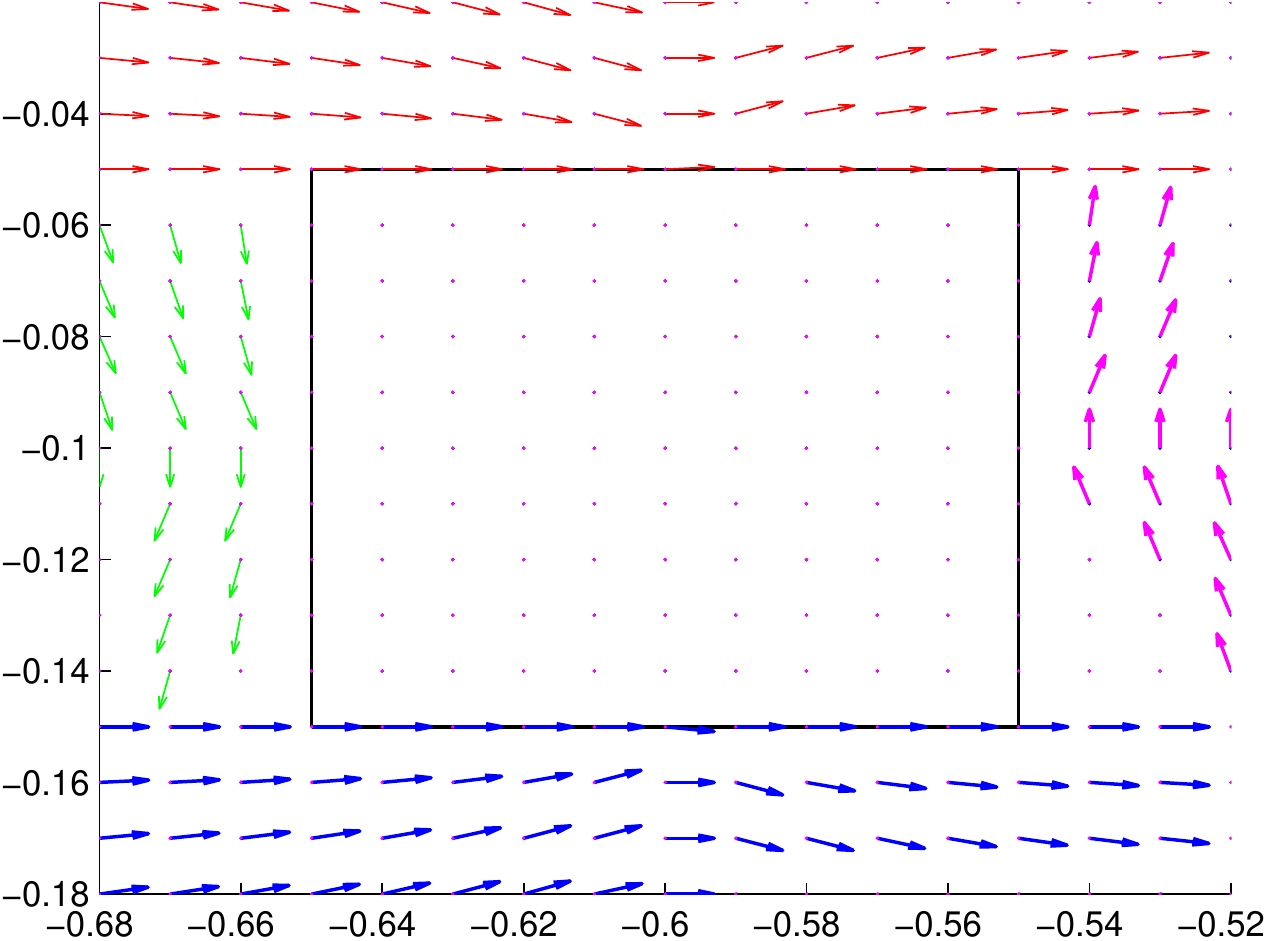}
\caption{Integral curves around a polygonal obstacle.}
\label{fig:polygonal_obstacle}
\end{figure}

%

\end{document}